

%
\documentclass[final,3p,times,twocolumn]{elsarticle}



\usepackage{pdfpages}
\usepackage{xcolor}
\usepackage{hyperref}
\usepackage{multirow}
\usepackage{amsmath}
\usepackage{amsthm}
\usepackage{amssymb}
\usepackage{bm}
\usepackage{graphicx} 
\usepackage{float}
\usepackage{subfig}
\usepackage{booktabs}
\usepackage[ruled]{algorithm2e}
\usepackage{balance}
\usepackage[font=small,skip=2.0pt]{caption}
\usepackage{cite}
\usepackage{diagbox}

\usepackage{svg}
\usepackage{pdfpages}
\usepackage{graphicx}
\newtheorem{theorem}{Theorem}
\newtheorem{corollary}{Corollary}
\newtheorem{ass}{\textbf{Assumption}}
\newtheorem{lemma}{Lemma}
\newtheorem{rem}{Remark}

\theoremstyle{definition}

\newcommand{\mathbold}[1]{\boldsymbol{#1}}

\newtheorem{definition}{Definition}

\usepackage{array}
\newcolumntype{C}[1]{>{\centering\arraybackslash}p{#1}}
\setlength{\textfloatsep}{0.5cm}

\usepackage{mathtools}




\journal{Robotics and Autonomous Systems}

\begin{document}

\begin{frontmatter}



\title{A Safety-Critical Framework for UGVs in Complex Environments:\\ A Data-Driven Discrepancy-Aware Approach}

\author[label1]{Skylar X. Wei}

\affiliation[label1]{organization={Division of Engineering and Applied Sciences,California Institute of Technology},
            addressline={1200 E. California Blvd.}, 
            city={Pasadena},
            postcode={91125}, 
            state={CA},
            country={USA}}

\author[label2]{Lu Gan}

\affiliation[label2]{organization={The School of Aerospace Engineering, Georgia
Institute of Technology},
            addressline={North Avenue}, 
            city={Atlanta},
            postcode={30332}, 
            state={GA},
            country={USA}}

\author[label1]{Joel W. Burdick}


\begin{abstract}

This work presents a novel data-driven multi-layered planning and control framework for the safe navigation of a class of unmanned ground vehicles (UGVs) in the presence of unknown stationary obstacles and additive modeling uncertainties. The foundation of this framework is a novel robust model predictive planner, designed to generate optimal collision-free trajectories given an occupancy grid map, and a paired ancillary controller, augmented to provide robustness against model uncertainties extracted from learning data.  

To tackle modeling discrepancies, we identify both matched (input discrepancies) and unmatched model residuals between the true and the nominal reduced-order models using closed-loop tracking errors as training data. Utilizing conformal prediction, we extract probabilistic upper bounds for the unknown model residuals, which serve to construct a robustifying ancillary controller. Further, we also determine maximum tracking discrepancies, also known as the robust control invariance tube, under the augmented policy, formulating them as collision buffers. Employing a LiDAR-based occupancy map to characterize the environment, we construct a discrepancy-aware cost map that incorporates these collision buffers. This map is then integrated into a sampling-based model predictive path planner that generates optimal and safe trajectories that can be robustly tracked by the augmented ancillary controller in the presence of model mismatches.

The effectiveness of the framework is experimentally validated for autonomous high-speed trajectory tracking in a cluttered environment with four different vehicle-terrain configurations. We also showcase the framework's versatility by reformulating it as a driver-assist program, providing collision avoidance corrections based on user joystick commands.

\end{abstract}



\begin{keyword}
Robust Planning \sep Safety-Critical Planning  \sep Conformal Prediction \sep Risk-Aware Control \sep Model Predictive Path Integral \sep Uncertainty Quantification



\end{keyword}

\end{frontmatter}



\section{Introduction} \label{sec:introduction}

\begin{figure*}[h!]
\centering
    \includegraphics[width=0.99\linewidth]{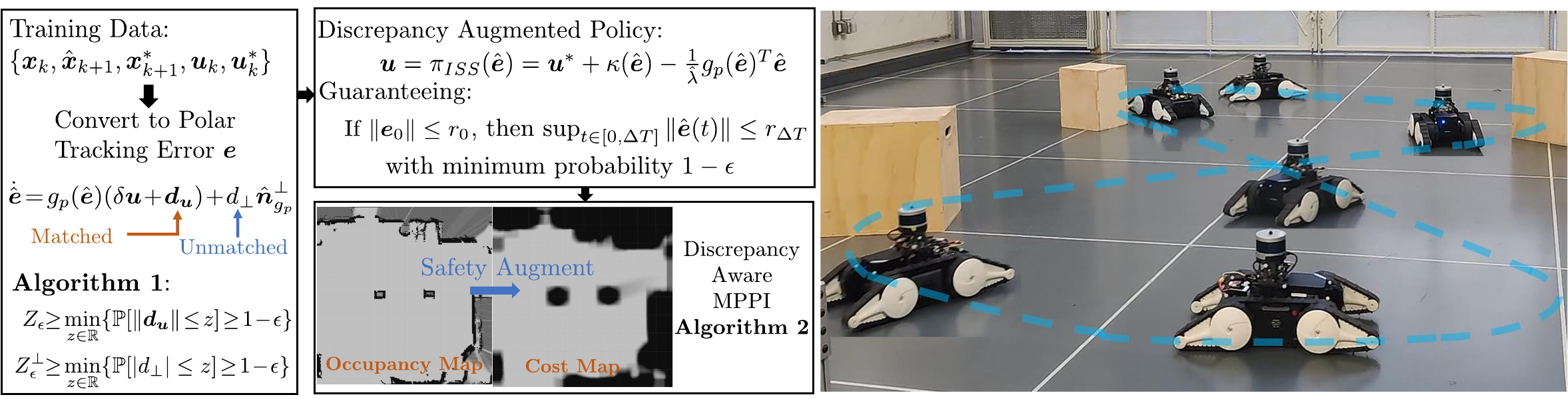}
    \caption{(Top) An overview of our three-step procedure, summarizes the safety-critical framework: (1) offline data-driven model discrepancy identification and learning, (2) augmentation of the control policy based on the learned upper bounds and the associated collision buffers, and (3) a discrepancy-aware MPPI algorithm provides receding-horizon safe trajectory and input pairs.
    (Bottom) snapshots of the UGV movement in high-speed trajectory tracking in a cluttered environment, given unsafe desired trajectories.}
    \label{fig:first_page_summary}
\end{figure*}

Safety is a critical requirement for autonomous exploration of complex environments by unmanned ground vehicles (UGVs) which have a wide range of applications such as agriculture \citep{bonadies2016survey}, search and rescue \citep{9197082}, package delivery \citep{wu2021autonomous}, and mining \citep{9172811}. Specifically, providing collision avoidance guarantees when navigating in unknown and unstructured environments poses unique challenges, such as model mismatches induced by the complex environments \citep{akella2022sample,saveriano2017data} and incomplete obstacle descriptions \citep{dixit2023step,bouman2020autonomous} in the case of \textit{a priori} unknown environment maps. 
Systematic and provable probabilistic frameworks for UGV traversal are needed to provide safety guarantees in such complex scenarios.

Despite the increasing complexity of UGVs and their deployment in diverse terrains, reduced-order models are commonly employed for vehicle planning and control design~\citep{9813568, williams2018robust, mppi_original}. However, a planning and control framework that relies on these models must be robustified against disturbances (e.g. to account for the residual difference between the reduced-order and true models) to guarantee safe and reliable traversal.  This paper introduces a data-driven technique to provide these guarantees even in the presence of a simplified (still underactuated) vehicle model with matched and unmatched model uncertainties \citep{sinha2022adaptive,1323177, pravitra2020}.  





Our method augments the vehicle's controller using learned dynamic discrepancies and applies rigorous uncertainty quantification to provide provably safe operation. Example model discrepancies include wheel slipping and skidding, ignored actuation dynamics, and communication delays. 

The proposed framework automatically learns an upper bound on model residuals from data and systematically calculates the corresponding collision buffers needed to provide guaranteed probabilistic safe navigation of UGV systems in unknown and cluttered environments.  This feature largely eliminates the ``hand tuning" of the underlying planner and controller that is normally required for simplified models. 

Our framework consists of the following contributions (see Fig. ~\ref{fig:first_page_summary}):
\begin{enumerate}
    \item \textbf{Data-Driven Discrepancy Identification}:  Using conformal prediction~\citep{CP_planning_Lindemann}, we extract a probabilistic upper bound of matched (controllable input) discrepancies and unmatched discrepancies from training data, without any assumptions on the discrepancy distributions.
    \item \textbf{Controller Augmentation}: The identified upper bounds are then used to augment the vehicle's nominal ancillary controller to ensure closed-loop stabilizability and robustness against uncertainties.
    \item \textbf{Collision Buffer Construction}: To ensure safety given model uncertainties, we deduce a maximum trajectory tracking deviation of the closed-loop system under the augmented controller. This {\em collision buffer} delineates the boundaries within which the system can operate safely, despite mismatches between the nominal and true vehicle models.
    
    \item \textbf{Discrepancy-Aware Planner}:
    A \textit{discrepancy-aware cost map} is constructed from the identified collision buffer and a sensory-derived occupancy map. This cost map can be seamlessly used with Model Predictive Path Integral (MPPI) to generate optimal finite-horizon trajectories that provably adhere to a user-chosen risk tolerance.
\end{enumerate}

The effectiveness of the proposed framework is demonstrated via hardware experiments on a differential-driven ground vehicle under four different vehicle and terrain configurations. Additionally, we adapted the proposed framework to provide collision avoidance for a human-in-the-loop driver assistance application. 
\subsection{Related Work}

Prior research has studied the impact of unmodeled disturbances on vehicle planning and control using frameworks such as input-to-state safe control~\citep{alan2021safe,NoelCS_multirate} and risk-aware control ~\citep{majumdar2020should,dixit2021risk}. However, these methods often assume \textit{a priori} model knowledge or require a minimum understanding of the disturbances' magnitudes or distributions.  Bayesian Optimization and Reinforcement Learning methods can bypass the uncertainty or model identification steps, directly learning risk-aware control policies in a model-free fashion~\citep{cakmak2020bayesian,makarova2021risk}.  These approaches still come with the assumption of \textit{a priori} knowledge of the disturbances, or that they can be sampled. More recently, the union of Neural Networks with adaptive control~\citep{o2022neural} has demonstrated remarkable tracking improvements in drone control given unknown residual dynamics. However, the theoretical robustness guarantees in~\citep{o2022neural} rely on knowledge of wind disturbance upper bounds, and they do not consider obstacles. 


A learned control policy can be combined with an optimal path planner, as demonstrated in robust model predictive control~\citep{bemporad2007robust, lopez2019dynamic} and chance-constrained stochastic optimal control~\citep{nakka2022trajectory,8613928}. Although these methods construct a deterministic problem surrogate to the original probabilistic one, they often require constraint convexification, such as sequential constraint linearization around fixed points, and optimality can only be reached in infinite sequential iterations. For instance, Monte Carlo Sample propagation~\citep{5477242,5970128} and scenario-based approaches~\citep{calafiore2013stochastic} can require a very large number of samples to reach the desired obstacle avoidance probability guarantees. 

The sampling-based Model Predictive Path Integral control (MPPI) method has proven versatility in off-road racing applications~\citep{mppi_original}. As a model-based strategy, MPPI  may exhibit suboptimal performance in the face of modeling uncertainties and disturbances. Tube-based~\citep{williams2018robust} and robust MPPI variants~\citep{9349120} have been proposed to address these challenges. Both of these methods integrate an ancillary controller, such as iterative Linear Quadratic Gaussian (iLQG), to improve tracking performance with measurement feedback and robustness to uncertainties.  These methods' assumptions on sequential linearization and the additive process and measurement noise being Gaussian limits their applicability to nonlinear systems with generalized additive model uncertainties. 

In unknown environments, UGV navigation often relies on sensor-based occupancy grid maps~\citep{occu_map_1,occu_map_2}, which are commonly used for global path planning~\citep{tsardoulias2016review}.  Grid maps can be used for local optimization-based path planning by converting the maps into signed distance functions~\citep{Oleynikova2016SignedDF, camps2022learning},  which can be locally convexified to serve as the collision-avoiding state constraints. Alternatively, occupancy-based risk maps have been used for sampling-based local planners~\citep{Fan_David_costmap, cai2023evora, Lakshay,dixit2023step}.  To alleviate the computation and convexity burden, the MPPI algorithm allows direct grid-map-based assessment for trajectory costs~\citep{Lakshay}. However, systematic parameter and cost tuning is required to avoid the algorithm making undesirable
decisions~\citep{yin2023shield, reward_hacking}. Lastly, direct estimation of traversability from sensors is also a popular technique for robotic navigation in complex environments leveraging expert heuristics~\citep{dixit2023step, 9813568} or self-supervised learning~\citep{frey2023fast}. However, it is not straightforward to extend these hardware successes to provide provable safety guarantees.

\subsection{Notation and Organization}

The set of integers, positive integers, natural numbers, real numbers, positive reals, and non-negative reals are denoted as $\mathbb{Z}, \mathbb{Z}_{>0}$, $\mathbb{N}$, $\mathbb{R}$, $\mathbb{R}_{>0}$, and $\mathbb{R}_{\geq 0}$, respectively. A sequence of consecutive integers, such as $\{i,\cdots,i+k\}$, is denoted as $\mathbb{Z}_{i}^{i+k}$. A finite sequence of indexed scalars, $\{a_1,\cdots,a_k\}$, is represented as $\{a_i\}_{i=1}^{k}$ and for vectors of $\mathbf{a}$ as $\{\mathbf{a}_i\}_{i=1}^{k}$. Indicator functions are defined as $\mathbf{1}_{b(a)}:\mathbb{R} \to \{0,1\}$, where $b(a)$ is a Boolean proposition over argument $a$. If $b(a)$ is true, the indicator function returns $1$. Else it returns $0$. 

Section \ref{sec:problem_Statement} describes the UGV traversal problem, while Section \ref{sec:prelim} provides the mathematical preliminaries. The characterization of tracking discrepancies using conformal prediction is discussed in Section \ref{sec:dd_discrepancy_identification}, while Section \ref{sec:maximum_discpreancy_tube} augments the nominal ancillary controller to account for the model discrepancies, and derives theoretical bounds on the maximum tracking deviation due to the discrepancies. Section \ref{sec:dA_MPPI} outlines the construction of a discrepancy-aware cost map and the discrepancy-aware MPPI planner. Experimental results, discussions, and future works can be found in Sections~\ref{sec:experiment} and \ref{sec:conclusion}. 

\section{Problem Statement} \label{sec:problem_Statement}
This work investigates the traversal of autonomous ground vehicles, such as differential drive robots, tracked vehicles, and skid steer vehicles. We adopt a widely used model of such vehicles for controller design and motion planning. This kinematic model, e.g. \citep{klancar2017wheeled,388294}, is described as:
\begin{align} \label{eq:origin}
\dot{\mathbold{x}} =   \begin{bmatrix}
    \cos(\theta) & \sin(\theta) & 0 \\
    0 & 0 & 1 
\end{bmatrix}^{T}\begin{bmatrix}
    v\\
    \omega
\end{bmatrix} = g(\mathbold{x})\mathbold{u} = f(\mathbold{x},\mathbold{u}),
\end{align}
where the system state \mbox{$\mathbold{x}=[x,y,\theta]^{T} \in \mathcal{D}^{\mathbold{x}} \subseteq \mathbb{R}^{3}$} consists of the robot's inertial $x,y$ position and its heading angle $\theta$.  The control inputs \mbox{$\mathbold{u} = [v,\omega]^{T} \in \mathcal{D}^{\mathbold{u}} \subseteq \mathbb{R}^{2}$} are the vehicle's linear and angular velocity in the body $x$- and $z$-axes, respectively. The allowable sets of states and inputs are denoted by $\mathcal{D}^{\mathbold{x}}$ and  $\mathcal{D}^{\mathbold{u}}$. 

The nominal model \eqref{eq:origin} is a reduced-order description of a general differential-driven ground vehicle: it assumes that the wheels or tracks contact the ground at all times, ignoring phenomena such as slipping, skidding, motor dynamics, and communication delays. A more comprehensive model, accounting for these neglected factors, can be formulated as:
   \begin{equation}\label{eq:true}
    \dot{\mathbold{x}} = \hat{f}(X) = f(\mathbold{x},\mathbold{u}) + \Tilde{f}(\mathbold{x}, \mathbold{h},\mathbold{u}, \mathbold{d}(t)),
    \end{equation}
where the true dynamics $\hat{f}:Q\to \mathbb{R}^3$ maps state space $Q$ to the vehicle's true linear and angular velocities,  $f(\mathbold{x},\mathbold{u})$ is the nominal model \eqref{eq:origin}, and $\Tilde{f}:Q \to \mathbb{R}^3$ is an \textit{a priori} unknown additive disturbance, that models the discrepancies--it is possibly a function of state $\mathbold{x}$, input $\mathbold{u}$, process noise $\mathbold{d}(t)$, and hidden states $\mathbold{h}$. The variable $X\in Q$ in equation \eqref{eq:true} can be interpreted as a vehicle operating state that includes inputs that correspond to the vehicle velocity $\dot{\mathbold{x}}$ via function $\hat{f}$. To note, we define projections from the vehicle operating state space $\mathcal{Q}$ to the nominal state space $\mathcal{D}^{\mathbold{x}}$ and nominal system input space $\mathcal{D}^{\mathbold{u}}$ for analyzing the discrepancy between the not fully observable true dynamics \eqref{eq:true} and the nominal model\eqref{eq:origin}. 
\begin{definition}
    Let $\mathbold{x}_{X}$ be the \textit{projection of sample $X$ onto $\mathcal{D}^{\mathbold{x}}$}, denoted as $P_{X\to \mathbold{x}}$, is defined as 
\begin{equation}\label{eq:projection_on_state_space}
P_{X\to\mathbold{x}}(X) \triangleq \mathbold{x}_{X} \in \mathcal{D}^{\mathbold{x}}.
\end{equation}
where $\mathcal{D}^{\mathbold{x}}$ is the nominal (reduced-order) admissible states and $X \in Q$ is a sampled vehicle true state.  Let $\mathbold{u}_{X}\in \mathcal{D}^{\mathbold{u}}$ be the projection $P_{X\to \mathbold{u}}$ of sample $X$ onto the nominal admissible input space $\mathcal{D}^{\mathbold{u}}$.
\end{definition}
Most importantly, we will only assume the existence projections $P_{X\to \mathbold{x}}$ and $P_{X\to \mathbold{u}}$, and we do not assume these projections are unique nor knowledge of their analytic forms, \textit{etc}. Now, the true dynamics \eqref{eq:true} given operating state $X$ satisfies
\begin{equation}
    \dot{\mathbold{x}}_{X} = \hat{f}(X) = f(\mathbold{x}_{X},\mathbold{u}_{X}) + \tilde{f}(X).
\end{equation}
For notation simplicity, we will drop the subscripts $X$ for $\mathbold{x}_{X}$ and $\mathbold{u}_{X}$ as we refer to the nominal state given any operating sample state $X$. 

Following many robust optimal control approaches \citep{lopez2019dynamic}, we study additive model uncertainties that satisfy the following assumption.
\begin{ass}
    The true system dynamics $\hat{f}$ can be expressed as $\hat{f} = f + \tilde{f}$ where $f$ is the nominal dynamics and $\tilde{f}$ represents all model uncertainties.
\end{ass}

\begin{figure}
\centering
    \includegraphics[width=0.99\linewidth]{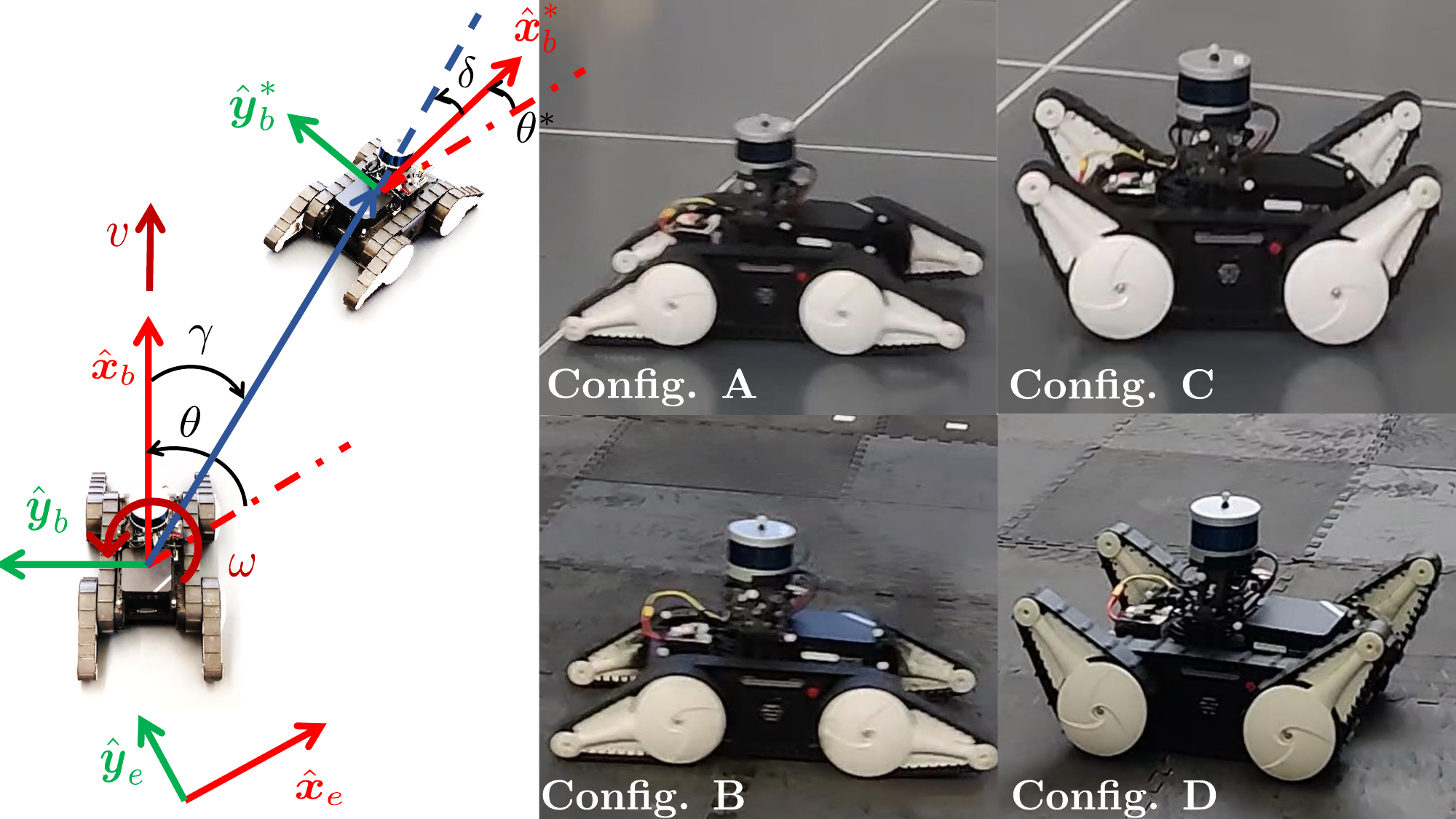}
    \caption{(Left) Definition of the polar coordinate. Subscripts $(\cdot)_b$, $(\cdot)_e$ denote the body frame and inertial frame, respectively. The pose with superscript $(\cdot)^*$ is the desired pose. (Right) Four experimentally validated scenarios. Config. A: UGV with flipper on rubber flooring. Config. B: UGV with flipper on foam flooring. Config. C: UGV without flipper on rubber flooring. Config. D: UGV without flipper foam flooring. }
    \label{fig:schematics}
\end{figure}

Our approach is based on the notion that the vehicle aims to follow a desired (denoted by $(\cdot)^d$) reference path  \mbox{$\mathbold{p}^{d}(t) = [x^{d}(t)$, $y^{d}(t)]^{T}$}, which is designed without considering obstacles or model simplifications and mismatches. Since the nominal model \eqref{eq:origin} is differentially flat, the input sequence given the desired reference $\mathbold{p}^d(t) = [x^d(t), y^d(t)]^{T}$  can be readily found using \eqref{eq:flat_input_v}, without considering obstacles. We denote the differentially flat input sequence given a position reference $\mathbold{p}^{d}(t)$ as the desired flatness-based input reference $\mathbold{u}^d = [v^d,\omega^d]$, which can be computed as
\begin{align} 
\label{eq:flat_input_v}
 v^d  \!=\!\begin{cases}
     \frac{\dot{x}^d}{\cos(\theta^d)},&\!\! \mbox{if} \sin(\theta^d)\!=\!0  \\
     \frac{\dot{y}^d}{\sin(\theta^d)},&\!\! \text{otherwise} 
 \end{cases}, \,\,
 \omega^d \! =\!
 \begin{cases}
     \frac{a^d}{v^d}, &\!\!  \mbox{if}\,v^d \neq 0  \\
     0 &\!\! \text{otherwise} 
 \end{cases},
\end{align}
\normalsize
where $a^d = \frac{-\sin({\theta}^d) \ddot{x}^d + \cos(\theta^d) \ddot{y}^d}{v^d}$ and $\theta^d = \mbox{ATAN2}(\dot{y}^d,\dot{x}^d))$.

\subsection{Robust Planning}
Suppose that a collision-avoiding planner could construct an open loop optimal state and controls sequences $(\mathbold{x}^*(t),\mathbold{u}^*(t))$ given initial condition $\mathbold{x}(0)$, that tracks the desired path $\mathbold{p}^{d}(t)$ while avoiding obstacles, for system governed by the nominal dynamics \eqref{eq:origin}. Notationally, an asterisk $(\cdot)^{*}$ denotes optimality: $\mathbold{u}^*(t)$ is a sequence of admissible control inputs for finite time or infinite time $t$ that minimizes the error in vehicle position \mbox{$\mathbold{p}(t) = [x(t),y(t)]^{T}$} relative to the reference path $\mathbold{p}^{d}(t)$, for the nominal model \eqref{eq:origin}. Note, that the planner is expected to deviate from the reference $\mathbold{p}^d(t)$ to avoid obstacles, such as in \citep{mppi_original,mppi_quadcopter,lindqvist2020nonlinear}, based on the forward propagated ego vehicle trajectory using the nominal model \eqref{eq:origin}.

Being a model-based method, the solution to the finite-time optimal control problem (FTOCP) may exhibit suboptimal performance or even safety violations of the actual vehicle in the face of modeling uncertainties and disturbances.

The standard Robust FTOCP formulation involves a minimization-maximization (min-max) optimization to construct a policy $\pi: \mathcal{D}^{x}\times \mathbb{Z} \to \mathcal{D}^{u}$ as follows~(\citep{diehl2004robust}):
\begin{align}
    \pi_k\! =\!\! &\min_{\{\mathbold{u}_{k}\}^{n_h - 1}_{k=0}}  \max_{\{\mathbold{d}_{k}\}_{k=0}^{n_h-1}}\!\sum_{k=0}^{n_h - 1}\! \mathcal{L}_{k}(\mathbold{x}_{k+1},\mathbold{u}_{k}, \mathbold{p}^d_{k+1}) \label{eq:FTOCP-cost}\\
     \mbox{s.t.} \quad&  \eqref{eq:true}, \quad   \mathbold{x}_k \in \mathcal{X},\, \mathbold{u}_k \in \mathcal{U}, \, \mathbold{d}_k \in \Omega^d, \label{eq:FTOCP-eqConst}
\end{align}
where $\mathbold{d}_{k}$ is a discrete-time realization of the model discrepancy $\tilde{f}$ given nominal state $\mathbold{x}_k$ and input $\mathbold{u}_k$. Parameters $n_h \in \mathbb{Z}_{>0}$ and $\Delta t\in \mathbb{R}_{>0}$ are the finite-time planning horizon length and time discretization step size. The stage cost $\mathcal{L}_k$ can be a general function such as the squared distance between the vehicle position and the desired reference. The solution to \eqref{eq:FTOCP-cost}-\eqref{eq:FTOCP-eqConst} can be computationally intractable~\citep{lopez2019dynamic}. Nevertheless, the resulting robust FTOCP policy may lead to overly conservative control sequences.

An alternative solution, which still provides robustness to uncertainty, is to pair an \textit{ancillary controller} $\kappa$ for disturbance rejection, with the open-loop MPC control input: 
\begin{equation} \label{eq:policy_over_arching}
    \pi = \mathbold{u}^* + \kappa(\mathbold{x},\mathbold{x}^*),
\end{equation}
where $\mathbold{u}^*(t)$ and $\mathbold{x}^*(t)$ are the open-loop input and state trajectories, respectively. The ancillary controller ensures the realized states $\hat{\mathbold{x}}$ of the uncertain system remain in a robust
control invariant (RCI) tube around the nominal trajectory $(\mathbold{x}^*,\mathbold{u}^*)$ from the initial state $\mathbold{x}_0$.

\begin{definition} \citep{lopez2019dynamic}
Let $\mathcal{D}^x$ be the set of allowable states and define the state tracking error as $\tilde{\mathbold{x}} \triangleq \mathbold{x} - \mathbold{x}^*$. The set $\Omega_{RCI} \subset \mathcal{X}$ is a \textbf{robust control invariant} (RCI) tube if there exist an ancillary controller $\kappa(\mathbold{x},\mathbold{x}^*)$ such that if $\tilde{\mathbold{x}}(t_0) \in \Omega_{RCI}$, for all realizations of the disturbance and model uncertainties, $\tilde{\mathbold{x}} \in \Omega_{RCI}$, $\forall t \geq t_0$.
\end{definition}

To complete the Robust MPC, a constraint-tightened version of the nominal MPC problem (using the nominal and unperturbed model) can be solved to generate a robust open-loop trajectory pair $(\mathbold{x}^*_{RCI},\mathbold{u}^*_{RCI})$.

\subsection{Ancillary Controller - Nominal Model}

Since the open-loop inputs $\mathbold{u}_d(t)$ or $\mathbold{u}^*(t)$ can exhibit poor tracking when applied to the actual vehicle, due to model simplifications and mismatches, we incorporate tracking error feedback using \citep{388294} as the \textit{nominal ancillary controller},
\begin{equation} \label{eq:fb_law}
  \kappa(\mathbold{e}) = \delta \mathbold{u} \triangleq \begin{cases} v = k_1 \rho \cos{\gamma}\\ 
    \omega = k_2 \gamma + k_1 \frac{\sin(\gamma)\cos(\gamma)}{\gamma}(\gamma + k_3 \delta)
 \end{cases},
\end{equation}
where $k_1$, $k_2$, and $k_3$ are positive constants. This ancillary controller drives the polar coordinates tracking error to zero with respect to the nominal model \eqref{eq:origin}. The tracking error in polar coordinates 
    \[ \mathbold{e} = [\rho, \gamma, \delta]^{T} \triangleq f_{\mathbold{e}}(\mathbold{x},\mathbold{x}^*)\]
is a function of the current vehicle state $\mathbold{x}$ and the optimal state $\mathbold{x}^*$, see Fig. \ref{fig:schematics}. 
Polar coordinate tracking error \mbox{$\rho\triangleq \sqrt{(x - x^*)^2 + (y-y^*)^2}$} is the distance from the vehicle's current $(x,y)$ position to a desired waypoint $(x^*,y^*)$. Polar tracking error \mbox{$\gamma \triangleq \tan^{-1}\frac{y - y^*}{x - x^*} - \theta$} denotes the angle between the vehicle's body-fixed $x$-axis (the longitudinal axis) and a vector that points from the vehicle's body-fixed frame origin to the desired waypoint $(x^*,y^*)$.  Similarly, \mbox{$\delta = \gamma + \theta - \theta^*$} is the analogous angle between the vehicle's body-fixed $x$-axis to the optimal $x$-axis given by $\theta^{*}$. For the nominal model, the time derivative of the polar coordinates tracking error $\mathbold{e} \triangleq [\rho, \gamma, \delta]^{T}$ satisfies:
\begin{equation}\label{eq:polar_eom}
    \dot{\mathbold{e}} =  \begin{bmatrix}
        -\cos(\gamma) &        \frac{\sin(\gamma)}{\rho} &  \frac{\sin(\gamma)}{\rho} \\
        0 & -1 & 0
    \end{bmatrix}^{T}\delta\mathbold{u} \triangleq g_{p}(\mathbold{e})\delta\mathbold{u}.
\end{equation}

The vehicle operates in the domain $\mathcal{D}^{\mathbold{e}} = \mathcal{D}_{\rho} \times \mathcal{D}_{\gamma} \times \mathcal{D}_{\delta}\subset\mathbb{R}^3$ where $\mathcal{D}_{\rho} = (dz, \rho_{max})$, $\mathcal{D}_{\gamma} = (-\pi/2,\pi/2)$, and $\mathcal{D}_{\delta} = (-\pi/2,\pi/2)$.  \footnote{The vanishingly small interval $(0,dz)$ is a ``deadzone" where the polar coordinate representation \eqref{eq:polar_eom} is ill-conditioned as $dz \to 0$.} Within domain $\mathcal{D}^{\mathbold{e}}$, the function $g_{p}(\mathbold{e})$ is bounded and Lipschitz continuous with Lipschitz constant $l_{g_p}$. 
We chose \eqref{eq:fb_law} as the ancillary controller because of the following result, given the operating domain $\mathcal{D}^{\mathbold{e}}$.

\begin{lemma}
    \citep{DeLuca2001,388294} \label{lemma:1}
    Consider system \eqref{eq:polar_eom}, control law \eqref{eq:fb_law}, and positive constants $k_1$, $k_2$ and $k_3$. The nominal closed-loop system is globally asymptotically stabilizing to $\mathbold{e}^* = 0$.
\end{lemma}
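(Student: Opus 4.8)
The plan is to establish stability through Lyapunov's direct method, supplemented by LaSalle's invariance principle, since the natural energy-like certificate will turn out to be only negative \emph{semi}-definite. First I would substitute the ancillary control law \eqref{eq:fb_law} into the error dynamics \eqref{eq:polar_eom} to obtain the closed loop in explicit form. A direct computation yields
\begin{align*}
\dot{\rho} &= -k_1\rho\cos^2\gamma, \\
\dot{\gamma} &= -k_2\gamma - k_1 k_3\,\frac{\sin\gamma\cos\gamma}{\gamma}\,\delta, \\
\dot{\delta} &= k_1\sin\gamma\cos\gamma,
\end{align*}
where the two $k_1\sin\gamma\cos\gamma$ contributions to $\dot{\gamma}$ cancel precisely because of the $\frac{\sin\gamma\cos\gamma}{\gamma}(\gamma+k_3\delta)$ term built into $\omega$. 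Note that $\frac{\sin\gamma\cos\gamma}{\gamma}$ extends continuously to $1$ at $\gamma=0$, so the closed-loop vector field is well defined on all of $\mathcal{D}^{\mathbold{e}}$.

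Next I would propose the radially unbounded, positive-definite candidate
\[
V(\mathbold{e}) = \tfrac{1}{2}\rho^2 + \tfrac{1}{2}\gamma^2 + \tfrac{1}{2}k_3\delta^2,
\]
and differentiate along the closed-loop trajectories. The key simplification is that the $\delta$-coupling terms appearing in $\gamma\dot{\gamma}$ and in $k_3\delta\dot{\delta}$ are equal and opposite, so they cancel and leave
\[
\dot{V} = -k_1\rho^2\cos^2\gamma - k_2\gamma^2 \le 0.
\]
Since $\gamma\in(-\pi/2,\pi/2)$ forces $\cos\gamma>0$ throughout $\mathcal{D}^{\mathbold{e}}$, $\dot{V}$ vanishes exactly on the set $E=\{\mathbold{e}:\rho=0,\ \gamma=0,\ \delta\ \text{arbitrary}\}$. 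This already gives Lyapunov stability and boundedness of trajectories.

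The crux of the argument, and the step I expect to be the main obstacle, is upgrading this to convergence of $\delta$, which $\dot{V}$ does not penalize. Here I would invoke LaSalle's invariance principle: any trajectory confined to $E$ has $\gamma\equiv0$, hence $\dot{\gamma}\equiv0$; but evaluating the $\gamma$-dynamics on $\gamma=0$ (using the limit value of $\frac{\sin\gamma\cos\gamma}{\gamma}$) reduces them to $\dot{\gamma}=-k_1 k_3\delta$, which forces $\delta\equiv0$. Thus the largest invariant subset of $E$ is the single point $\mathbold{e}^*=0$, and LaSalle gives convergence of every trajectory to the origin; together with the radial unboundedness of $V$ this yields global asymptotic stability, recovering the result of \citep{DeLuca2001,388294}. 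The remaining work is purely technical, namely confirming well-posedness and forward invariance on the open domain $\mathcal{D}^{\mathbold{e}}$ (in particular away from the $\rho$-deadzone) so that the hypotheses of LaSalle --- precompact trajectories along which $V$ is nonincreasing --- are indeed satisfied.
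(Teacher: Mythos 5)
Your proposal is correct and follows essentially the same route as the paper: the identical Lyapunov function $V = \tfrac{1}{2}(\rho^2+\gamma^2+k_3\delta^2)$ and the same computation $\dot V = -k_1\rho^2\cos^2\gamma - k_2\gamma^2 \le 0$. The paper itself stops at the negative semi-definite $\dot V$ and defers the rest to the cited references \citep{DeLuca2001,388294}, whereas you correctly note that $\dot V$ does not penalize $\delta$ and supply the missing LaSalle invariance step (on the set $\rho=\gamma=0$, the $\gamma$-dynamics reduce to $\dot\gamma=-k_1k_3\delta$, forcing $\delta=0$), which is exactly the argument the paper leaves implicit.
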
 
Lemma \ref{lemma:1} can be proved by the following valid and positive definite control Lyapunov function (CLF):
\begin{equation} \label{eq:Layp}
    V(\mathbold{e}) = \frac{1}{2}(\rho^2 + \gamma^2 + k_3 \delta^2).
 \end{equation}
We here showcase several properties of $V(\mathbold{e})$ which support the main theorem in Section \ref{sec:maximum_discpreancy_tube}. For quadratic form (\ref{eq:Layp}), there exist $\alpha_1,\alpha_2 \in \mathbb{R}_{>0}$ where $\alpha_1 \|\mathbold{e}\|_2\leq V \leq \alpha_2 \|\mathbold{e}\|_2$, $\forall \mathbold{e}\in {\mathcal{D}}^{\mathbold{e}}$. Its time derivative \mbox{$\dot{V} = -k_1\cos^2(\gamma) \rho^2 - k_2 \gamma^2 = -\alpha_3(\|\mathbold{e}\|)$} is negative semi-definite. Adhering to the Lyapunov framework, there exists a class $\mathcal{K}_{\infty}$ \citep{Khalil_2002} function $\alpha_3$ such that \mbox{$\dot{V} \leq - \alpha_3(V)$}. Parameters $\alpha_1,\alpha_2 \in \mathbb{R}_{>0}$, and function $\alpha_{3}(\cdot)$ play an important role in system safety and stability when model \eqref{eq:polar_eom} is perturbed by unknown discrepancies, see Section \ref{sec:maximum_discpreancy_tube}.

 The nominal trajectory tracking control policy  
\begin{equation} \label{eq:nominal_tracking_law}
\mathbold{u} = \pi(\mathbold{e}) = \mathbold{u}^* + \kappa(\mathbold{e}) = \mathbold{u}^* + \delta \mathbold{u}
\end{equation} 
asymptotically tracks $\mathbold{x}^*$ when model mismatches are not present.  Our goal is to construct trajectories and control policies that are robust to model discrepancies and that provide probabilistic guarantees to avoid \textit{a priori} unknown obstacles.

\subsection{Modeling Discrepancies}

We use a probabilistic framework to describe model discrepancies and to define a probabilistic data-driven upper bound on the sum of all model discrepancies.

\begin{definition}
   Let function $\tilde{f}: Q \to \mathbb{R}^3$  measure the difference between the nominal model \eqref{eq:origin} and the true model  \eqref{eq:true}.  Let $X\in Q$ denote a vehicle operating state where $Q$ is the vehicle's operating state space and $\tilde{f}(X)\in \mathbb{R}^3$ denote the additive disturbance at operating state $X$. We assume that the additive disturbance random variable is drawn from a probability space $(\Omega, \mathcal{F}, \mathbb{P})$, consisting of a sample space $\Omega$, a $\sigma$-algebra $\mathcal{F}$ over $\Omega$ defining events, and a probability measure $\mathbb{P}$. We define the {\em minimal model discrepancy upper bound}, $z_{\epsilon}^{\tilde{f}}$, given a risk level $\epsilon \in [0,1]$ over $\Omega$ for function $\tilde{f}$ as:
\begin{equation} \label{eq:minimum_model_discrepancy}
        z_{\epsilon}^{\tilde{f}} \triangleq arg\min_{z\in \mathbb{R}}\{\mathbb{P}[\|\Tilde{f}(X)\| \leq z] \geq 1-\epsilon \}, \quad \forall X\in Q.
\end{equation}
\end{definition}
The value $z_{\epsilon}^{\tilde{f}}$ can be interpreted as the $(1-\epsilon)$ percentile of all possible model discrepancies norms, $\|\tilde{f}(X)\|$, for all $X\in Q$.  Such differences, i.e. residual dynamics, arise from too simplistic vehicle modeling.  The nominal model \eqref{eq:origin} disregards some robot configurations (such as different flipper angles) and terrain types (rubber and foam in our experiments) as illustrated in Fig. \ref{fig:schematics}.

The minimum value in \eqref{eq:minimum_model_discrepancy} may only be reached in the limit of infinite samples of $\tilde{f}(X)$, or knowing the probability measure of $X$ on $\Omega$, which cannot be realized in practice. Instead, we seek to identify a model discrepancy bound $Z_{\epsilon}^{\tilde{f}} \geq z_{\epsilon}^{\tilde{f}}$ given finite samples of $\mathbold{x}_{X}$, defined in \eqref{eq:projection_on_state_space}, while making no assumptions on the distribution over $\Omega$ or the explicit form or uniqueness of the projection $P_{X\to \mathbold{x}}$. That is, $Z_{\epsilon}^{\tilde{f}}$ is a model discrepancy bound, not necessarily the tightest bound over $\|\tilde{f}(X)\|$ but a realizable bound from a limited data set.

\subsection{Obstacle Avoidance}

We study the safety-critical autonomy of ground vehicles traversing through incompletely known terrains populated with stationary but \textit{a priori} unknown obstacles. We assume that a 2-D occupancy grid map ($\mathcal{O}_t $, derived from sensor data) is available to the vehicle. The map, defined in inertial coordinates, has a width and length of  $w_{map}$, $l_{map} \in \mathbb{Z}_{>0}$, respectively, and a grid resolution of $r_{map}$. Each grid cell is associated with an integer value between $0$ and $100$ that describes the cell occupancy probabilities where $0$ indicates a $0 \%$ chance of obstacle occupancy and vice versa. Unobserved cells are assigned a risk-neutral $50 \%$ occupancy probability. The map is built incrementally as the robot moves  (see Section \ref{sec:experiment}). This work focuses on the dynamic uncertainties arising from model residuals and ignores possible sensing uncertainties. Our method can be extended to include measurement uncertainties. 


Given this sensor-based grid map of the unknown environment, our framework allows a ground vehicle with terrain modeling discrepancies to track a reference trajectory within the map while having the following probabilistic safety guarantees on collision avoidance. 
\begin{equation} \label{eq:obs_avoid}
    \mathbb{P}[ \mathbold{p}_{t} \notin \mathcal{O}] > 1-\epsilon \quad  \forall t, 
\end{equation}
for any $\mathbold{p}_{t}\in \mathcal{V}_t$ where $\mathcal{V}_t$ denotes a uniform probability distribution over bounded set of $x,y$ positions occupied the vehicle geometry at time $t$. The probability statement \eqref{eq:obs_avoid} enforces collision-free given occupied space in $\mathcal{O}$ with confidence $1-\epsilon$ for all time.
\begin{rem}
Our analysis focuses on flat terrains populated with stationary obstacles. Nevertheless, our framework can be readily extended to uneven surfaces using model \eqref{eq:origin} or a higher-fidelity model for rough terrain applications, such as \citep{CHAKRABORTY20041273}, as the nominal model.  
\end{rem}
\section{Preliminaries} \label{sec:prelim}

To ensure vehicle safety in the presence of model uncertainties, we aim to infer the statistical bounds Eqn. \eqref{eq:minimum_model_discrepancy} from training data and incorporate this knowledge into the vehicle's control and planning strategies. Because differential-driven ground robots are underactuated, disturbance $\tilde{f}$ cannot be fully compensated via controller design. We formally distinguish between the components of $\tilde{f}$ that can be ``matched" by control inputs and those that remain ``unmatched" due to underactuation. Using closed-loop tracking data, we identify bounds for the matched and unmatched model discrepancies using {\em conformal prediction}, a statistical inference tool that is reviewed next. For more details, see \citep{cp_book,dixit2023adaptive}.

Using the data-driven model discrepancy bound, we formulate an FTOCP that leverages the quantified uncertainties to synthesize trajectories with probabilistic safety guarantees.  We employ a sampling-based strategy, model predictive path integral (MPPI), to solve the FTOCP. A full analysis of MPPI beyond our cursory review can be found in \citep{mppi_original,8558663}.  

\subsection{Matched and Unmatched Discrepancies}
We decompose the discrepancy $\Tilde{f}$ into a matched component that lies in the controllable subspace and an unmatched component that lies in the unmatched subspace, similar to \citep{sinha2022adaptive,1323177, pravitra2020}. 


\begin{definition} 
The additive model uncertainty $\tilde{f}$ can be decomposed into \textit{matched} ($\tilde{f}_{m}$) and \textit{unmatched} ($\tilde{f}_{um}$) discrepancies  with respect to reduced-order model \eqref{eq:origin}. The \textit{matched discrepancy} $\tilde{f}_{m}:Q\to \mathbb{R}^3$ satisfies
\begin{equation}
    \tilde{f}_{m}(X) \in \mbox{mtch}({g}(\mathbold{x}_{X})), \quad\forall \, X \in Q,
\end{equation}
where $\mbox{mtch}(g) \triangleq \mbox{span}(g_{1},g_2)(\mathbold{x})$ is the range space of the actuation matrix of system \eqref{eq:origin} such that $g_1$, $g_2$ are the columns of the actuation matrix $g(\mathbold{x})$.
The \textit{unmatched discrepancy} is defined as the function $\tilde{f}_{um}:Q \to \mathbb{R}^3$ satisfying
\begin{equation}
    \tilde{f}_{um}(X) \in \mbox{umtch}({g}(\mathbold{x}_{X})), \quad\forall \, X \in Q.
\end{equation}
where $\mbox{umtch}(g) \triangleq \{\mathbold{x} \in \mathcal{D}^{\mathbold{x}} \,|\, rank(g(\mathbold{x})) < 2\}$ is the null space of the actuation matrix.
For all samples $X\in Q$ and their respective reduced-order state $\mathbold{x}_{X}\in \mathcal{D}^{\mathbold{x}}$, the summation $\tilde{f}(X) = \tilde{f}_{m}(X)  + \tilde{f}_{um}(X)$ holds.
\end{definition}

\subsection{Conformal Prediction}
Conformal prediction (CP) provides model- and assumption-free uncertainty quantification to black-box models\citep{cp_book,angelopoulos2021gentle}. Conformal prediction produces sets that are guaranteed to contain the desired ground truth on any pre-trained model while satisfying a user-defined probability. Our pre-trained model is the nominal model \eqref{eq:origin}. We use CP to calibrate a model discrepancy \mbox{$Z_{\epsilon}^{\tilde{f}} \geq z_{\epsilon}^{\tilde{f}}$} with a user-defined probability $\epsilon \in [0,1]$ without needing to make assumptions on the probability distribution over the discrepancy sample space.

Let $Y_0, \hdots, Y_{n-1}$ be $n$ exchangeable random variables where each $Y_i \in \mathbb{R}$ is a {\em nonconformity score}. The nonconformity score is typically chosen to express the difference between the pre-training model of an unknown system and calibration data (discrepancy ground truth) from that unknown system. For calibrating model discrepancies $\tilde{f}$ in \eqref{eq:true}, the nonconformity score can be a metric measuring its magnitude such as $Y^i \triangleq \|\hat{f}(X_i) - f(X_i)\| = \|\tilde{f}(X_i)\|$. The conformal prediction algorithm outputs a value $\overline{Y}_{\epsilon}: \mathbb{R}^{n}\times [0,1] \to \mathbb{R}$ such that the inequality $\mathbb{P}[Y\leq \overline{Y}_{\epsilon}] \geq 1 - \epsilon$~holds over the sample space of $Y$. Intuitively speaking, a large $Y_i$ can be interpreted as a large model discrepancy, indicating that there is a poor matching between the nominal and true model. 


Conformal prediction is an uncertainty quantification framework built on top of empirical statistics. Let the cumulative distribution function $F_{X} \triangleq \mathbb{P}(Y(X)\leq \overline{Y}_{\epsilon}) = p$. Let $Q: [0,1] \to \mathbb{R}$ be a Quantile function, the $(1-\epsilon)^{th}$ quantile returns the value $\overline{Y}_{\epsilon}$ such that $Y \leq \overline{Y}_{\epsilon}$ for all $X\in \Omega$ as $
    Q(1-\epsilon) = F_{X}^{-1}(1-\epsilon)$.


Tibshirani \textit{et. al.}\citep[Lemma 1]{tibshirani2019conformal} relates that the desired $\overline{Y}_{\epsilon}$ is equivalently the $(1-\epsilon)^{th}$ quantile of the empirical distribution formed by nonconformity scores $\{Y_0,\hdots,Y_{n-1}$, $\infty\}$ with an additive $\infty$. To calculate the $(1-\epsilon)^{th}$ quantile, let $\{Y_{(0)},\hdots,Y_{(n)}\}$ be a non-decreasing sorting of $\{Y_0,\hdots,Y_{n-1}, \infty\}$ where $Y_{(i)} \leq Y_{(i+1)}, \forall i\in \mathbb{Z}_{0}^{n}$, also known as the the order statistics of $\{Y_i\}_{i=0}^{n}$. The integer $q_{\epsilon} \triangleq \lceil (n+1)(1-\epsilon)\rceil$ is the order index that corresponds to the $(1-\epsilon)$~percent confidence level empirically on the calibration data set, where the multiplier $(n+1)$ is the finite sample adjustment. The $(1-\epsilon)^{th}$ quantile, $\overline{Y}_\epsilon \triangleq Y_{(q_{\epsilon})}$ defines the desired $1-\epsilon$ prediction region satisfying the following probability statement:
\begin{equation} \label{eq:cp_probability_statement}
    \mathbb{P}[\|\tilde{f}(X)\| \geq \overline{Y}_\epsilon] \leq \epsilon
\end{equation}
for any random variable $X\in Q$ and $\overline{Y}_{\epsilon} = Z_{\epsilon}^{\tilde{f}}$ as desired. The set $C_{\tilde{f}}(X) \triangleq \{\mathbold{y}\in \mathbb{R}^{2}\, |\, \|\mathbold{y}\| \leq \overline{Y}_{\epsilon}\}$ is the $(1-\epsilon)$ confidence conformal prediction set. 

For the tightness of the conformal probabilistic upper bound, Sadinle \textit{et. al.} \citep{sadinle2019least} shows the conformal prediction set $C_{\tilde{f}}$, given $\epsilon \in [0,1]$, achieves the smallest average set of all possible prediction schemes $\mathcal{C}$ that offer the desired coverage guarantee if the sample nonconformity scores $\{Y_0, \cdots, Y_{N-1}\}$ reflects the true conditional probability:
\begin{equation} \label{eq:smallest_cp}
    \min_{C\in \mathcal{C}} \mathbb{E}[C(X)], \quad \mbox{subject to} \, \eqref{eq:cp_probability_statement}.
\end{equation}


\subsection{Model Predictive Path Integral (MPPI)}
MPPI is a sampling-based method, leveraging a duality condition from information theory, to obtain local optimal controllers given potentially nonlinear cost and constraints. Consider input sequence $\{\mathbold{u}_{i}\}_{i=0}^{t}$ where $\mathbold{u}\in \mathbb{R}^{n_{\mathbold{u}}}$, we define $\mathbold{v}_i \triangleq \mathbold{u}_i + \mathbold{\delta}_i$, $\forall i\in \{0,\hdots, t\}$ as the sequence of {\em perturbed inputs} where $\mathbold{\delta}_{i}$ are zero mean Gaussian input perturbations  $\{\mathbold{\delta}_i\}_{i=0}^{t-1} \in \mathcal{N}(0, \Sigma_{\mathbold{u}})$. We denote the input trajectories to the algorithm as the following sequences
\begin{eqnarray}
    V_{t-1} &\triangleq &  \left[\mathbold{v}_{0},\mathbold{v}_0,\cdots, \mathbold{v}_{t-1} \right] \in \mathbb{R}^{n_{\mathbold{u}}\times t},\\
    \Delta_{t-1} &\triangleq &  \left[\mathbold{\delta}_{0},\mathbold{\delta}_1,\cdots, \mathbold{\delta}_{t-1} \right] \in \mathbb{R}^{n_{\mathbold{u}} \times t},\\
    U_{t-1} &\triangleq &  \left[\mathbold{u}_{0},\mathbold{u}_1,\cdots, \mathbold{u}_{t-1} \right] \in \mathbb{R}^{n_{\mathbold{u}}\times t}.
\end{eqnarray}

Following the result from \citep{8558663}, the probability density function for $V_t$, denoted as $d_{V|U,\Delta}$, is
\begin{equation}
    d_{V|U,\Delta} = \left({(2\pi)^{n_{\mathbold{u}}}|\Sigma_{\mathbold{u}}|}\right)^{\frac{-t}{2}} \mbox{exp}\left(-\frac{1}{2} \sum_{i=1}^{t}\mathbold{\delta}_i^{T} \Sigma_{\mathbold{u}}^{-1}\mathbold{\delta}_{i}\right),
\end{equation}
with corresponding sample space, denoted as ${\Omega}_{U,\Sigma}$. Similarly, we denote the probability distribution corresponding to $U_t = 0$ as $\Omega_{0,\Sigma}$, known as the base distribution.
Consider the stochastic trajectory optimization problem, adopted from \citep{8558663}:
\begin{equation} \label{eq:stochastic_traj_opt}
    U_{t-1}^{*} = \min_{\begin{smallmatrix}
        \mathbold{u}_{i} \in \mathcal{U} \\
        \forall i \in \mathbb{Z}_{1}^{t}
    \end{smallmatrix}}  \mathbb{E}_{\Omega(U,\Sigma)} \Big[\mathcal{L}_{T}(\mathbold{x}_t) + \sum_{i=0}^{t-1}\mathcal{L}(\mathbold{x}_i, \mathbold{u}_i)\Big],
\end{equation}
where $\mathcal{L}_{T}$ and $\mathcal{L}$ respectively denote terminal and stage costs. Let $\psi_{\mathbold{x}}(t,{V}_{t-1},\mathbold{x}_0)$ be the flow of system \eqref{eq:origin} generated by applying input sequence $V_t$ from initial condition $\mathbold{x}_0$. 


From an information theoretic perspective, optimization \eqref{eq:stochastic_traj_opt} can be converted into a probability-matching problem \citep{8558663}. Define the \textit{Free Energy} $\mathcal{F_E}$ as 
\begin{align*}
    \mathcal{F_E} &=  \log \left( \mathbb{E}_{\Omega_{0,\Sigma}}\Bigg[ \mbox{exp}\left(- \frac{C_{\mathbold{x}}(V_{t-1},\mathbold{x}_0)}{\lambda}\right) \Bigg]\right),
\end{align*}
where $\lambda > 0$ is commonly referred to as the \textit{inverse temperature}, and the function $C_{\mathbold{x}}(V_{t-1},\mathbold{x}_0) = C_{\mathbold{x}}$ outputs the cost of the trajectory generated using input sequence $V_{t-1}$ from initial state $\mathbold{x}_0$. Using the KL-Divergence properties between two probability distributions that are \textit{absolutely continuous} with each other, Williams \textit{et. al.}\citep{8558663} showed that a lower bound to the optimal control problem \eqref{eq:stochastic_traj_opt} can be obtained using the free energy:
\begin{align}
    -\lambda \mathcal{F}_{\mathcal{E}} \leq \mathbb{E}_{\Omega_{U,\Sigma}}\left[\mathcal{L}_{T}(\mathbold{x}_{t}) + \sum_{i=0}^{t-1}\mathcal{L}(\mathbold{x}_i,\mathbold{u}_i)\right].
\end{align}


Summarizing the result in \citep{mppi_original,7487277}, the following iterative update law, building upon importance sampling,
\begin{align}
    U_{t}^{*} &= U_{t} + \sum_{j=1}^{N_{sample}} w(\Delta^j_{t}) \Delta_{t}^j, \label{eq:is_1}\\
    w(\Delta_{t}^j) &=  \frac{1}{\eta} \mbox{exp}\left({-\frac{C_{\mathbold{x}}}{\lambda} - \sum_{i=0}^{t-1} \mathbold{u}_i^{T}\Sigma_{\mathbold{u}}^{-1} (\mathbold{u}_i + 2\mathbold{\delta}_i^j)}\right), \label{eq:is_2}
\end{align}
produces a locally optimal solution to the problem \eqref{eq:stochastic_traj_opt}.
The weighting term $w\in \mathbb{R}$ characterizes the ``importance" of each sampled input perturbation $\{\Delta_{t}^{j}\}_{j=1}^{N_{sample}}$, where $N_{sample}$ is the number of sampled input perturbations. The parameter $\eta$ can be approximated as
\begin{equation*}
    \eta \approx \sum_{i=1}^{N_{sample}}\mbox{exp}\left({-\frac{C_{\mathbold{x}}}{\lambda} - \sum_{i=0}^{t-1} \mathbold{u}_i^{T}\Sigma_{\mathbold{u}}^{-1} (\mathbold{u}_i + 2\mathbold{\delta}_i^j)}\right)
\end{equation*}
using Monte-Carlo estimation \citep{mppi_original}. 

In general, MPPI is a sampling-based alternative to optimization-based MPC algorithms, flexible to general cost functions and system dynamics. The aggregation law \eqref{eq:is_1}-\eqref{eq:is_2} is highly parallelizable for online receding-horizon planning, as demonstrated in UGV racing applications \citep{mppi_original}.


\section{Data-Driven Discrepancy Identification} \label{sec:dd_discrepancy_identification}
This section summarizes the data-driven identification of a model discrepancy bound $Z_{\epsilon}^{\tilde{f}}$, where $\tilde{f}$ captures model mismatches in \eqref{eq:true} from sources like slipping, skidding, change in surface traction properties, input delays, etc. Discrete-time data is captured from the continuous system at sampling interval $\Delta T$.  Time is indicated by an integer:  variable $\mathbold{x}_i$ denotes the value of state $\mathbold{x}$ at time $t_i=i\Delta T$. $N_T$ tuples of training data are gathered from running the system with the nominal tracking control law \eqref{eq:nominal_tracking_law} and each tuple has the form 
\begin{equation} \label{eq:tuple_training}
 S_{i} = \{\mathbold{x}_{i-1},\hat{\mathbold{x}}_i, \mathbold{x}^*_i, \mathbold{u}_{i-1},\mathbold{u}^*_{i-1}\},
\end{equation}
where $\mathbold{x}_{i-1}$ is the state at time $t_{i-1}$, and the pair $(\mathbold{x}^*_i, \mathbold{u}^*_{i-1})$ is the optimal state and the optimal input at $t_i$ with respect to the cost in \eqref{eq:stochastic_traj_opt}. This data set is collected from multiple vehicle trajectories that result from an optimal trajectory planner (e.g., by a MPC-based planner for dynamics \eqref{eq:origin}) of the form $(\mathbold{x}^*_{i}, \mathbold{u}^*_{i-1})$ given current state $\mathbold{x}_{i-1}$. 

The actual control input $\mathbold{u}_{t-1}$ applied to the vehicle in the interval $[t_{i-1},t_i]$ is the sum of the optimal input $\mathbold{u}^*_{t-1}$ and and a correction term, $\delta \mathbold{u}_{t-1}$, applied by the ancillary controller $\eqref{eq:fb_law}$, given the polar coordinate tracking error: $\mathbold{u}_{i-1} = \mathbold{u}^*_{i-1} + \delta \mathbold{u}_{i-1} = \mathbold{u}^*_{i-1} + \kappa(\mathbold{e}_{i-1})$. $\hat{x}_i$ is the system's actual state arising from the control input $\mathbold{u}_{i-1}$  when it is held constant during $t\in [t_{i-1},t_i]$. 
The propagation of state $\mathbold{x}_{i-1}$ with input $\mathbold{u}_{i-1}$ through nominal model \eqref{eq:origin}, might be different from the measured $\hat{\mathbold{x}}_{i-1}$ because of unmodeled discrepancy $\tilde{f}$. Assuming that the optimal planner and ancillary controller are recomputed at each time step, we have for the $j^{th}$ tuple $S_{j}$, ${\mathbold{x}}_{j-1} = \hat{\mathbold{x}}_j$ holds. Figure \ref{fig:conformal_prediction_overview} overviews the proposed identification process from training data. 

\begin{figure*}
\centering
\centering
    \includegraphics[width=0.99\linewidth]{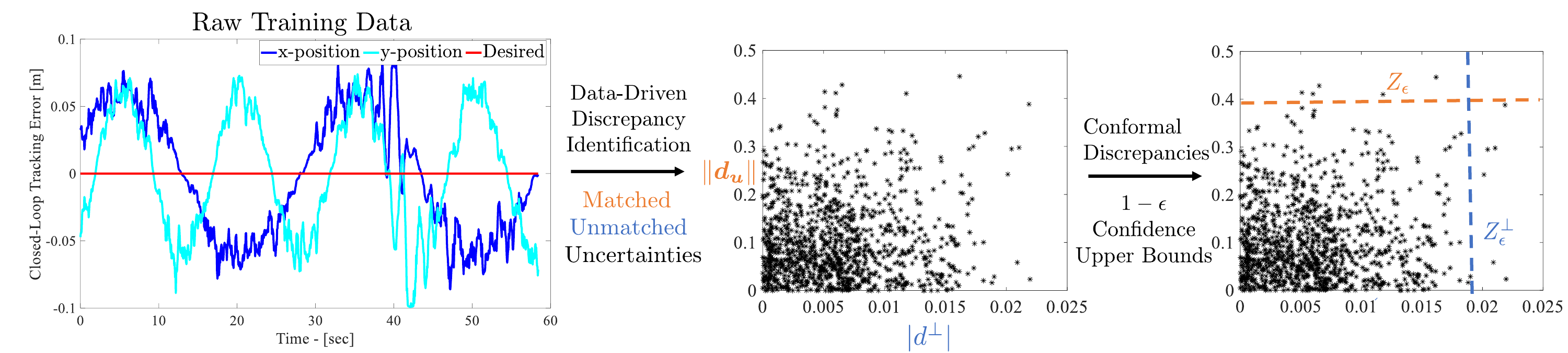}
    \caption{An outline of the proposed data-driven discrepancy identification method. From training data $\mathcal{S}_i$ (left panel), we first separate the closed-loop tracking errors into matched and unmatched model discrepancies (center panel). Given a user-specified risk tolerance $\epsilon$, conformal prediction is applied to obtain their probabilistic upper bounds (right). }
    \label{fig:conformal_prediction_overview}
\end{figure*}

\subsection{Discrepancy Identification}

To identify the matched and unmatched input discrepancies, we first compute the nominal model tracking error $\mathbold{e}_{i-1}$ using $\mathbold{x}_{i-1}$ and $\mathbold{x}^*_i$ at time $t_{i-1}$. Then, the feedback (measured) tracking error at time $t_i$, denoted as $\hat{\mathbold{e}}_i$, is a function of $\hat{\mathbold{x}}_i$ and $\mathbold{x}^*_i$.

Using \eqref{eq:polar_eom}, we calculate the polar coordinate tracking error $\mathbold{e}_L$ from the nominal error dynamics \eqref{eq:polar_eom} at $t_i$ given initial condition $\mathbold{e}_{i-1}$ and assuming that the ancillary controller $\delta \mathbold{u}_{i-1}$ satisfies
\begin{equation}
\mathbold{e}_i = \int_{t_{i-1}}^{t_{i}} g_p({\mathbold{e}}(\tau))\delta\mathbold{u}_{i-1} d\tau + {\mathbold{e}}_{i-1},
\end{equation}
where $\delta \mathbold{u}_{i-1}$ is zero-order hold during the interval $[t_{i-1},t_i]$.

Let $\delta{\hat{\mathbold{e}}}_{i\Delta T} \triangleq \hat{\mathbold{e}}_i - \mathbold{e}_i$ be the measured tracking error between the true system \eqref{eq:true} against the nominal system \eqref{eq:origin} given the same initial condition, $\mathbold{e}_{i-1}$, and the same input over the interval, $\delta \hat{\mathbold{e}}_{i\Delta T}$ which can be decomposed as follows
\begin{align} 
\delta{\hat{\mathbold{e}}}_{i\Delta T} \!&=\!\! \int_{t_{i-1}}^{t_{i}}\!\!\! \left(\hat{f}(\hat{\mathbold{e}}(\tau)) \!-\! g_p({\mathbold{e}}(\tau))\delta\mathbold{u}_{i-1}\right) d\tau \!+\!\!\
\underbrace{\delta\mathbold{e}_{(i-1)\Delta T}}_{=0} \nonumber \\
&= \int_{t_{i-1}}^{t_i} g_p(\mathbold{e}(\tau))\mathbold{d}_{\mathbold{u}} d
    \tau   + \int_{t_{i-1}}^{t_i} \mathbold{d}_{\mathbold{u}^{\perp}}d\tau, \label{eq:error_discrepancy}
\end{align}
where $\mathbold{d}_{\mathbold{u}}\in \mathbb{R}^2$ represents the matched input discrepancy. The error $\mathbold{d}_{\mathbold{u}^{\perp}} = d^{\perp} \hat{\mathbold{n}}^{\perp}_{g_p}\in\mathbb{R}^3$ lies in the null space of $g_p$, which is spanned by the unit vector $\hat{\mathbold{n}}^{\perp}_{g_p}$. Parameter $d^{\perp} \in \mathbb{R}$ is the magnitude of the unmatched disturbance. By numerical differentiation with $\Delta T \to 0$, we can identify the perturbed polar coordinate error dynamics in continuous time:
\begin{equation}
    \dot{\hat{\mathbold{e}}} = \dot{\mathbold{e}} + \delta\dot{\mathbold{e}}= g_{p}(\hat{\mathbold{e}})(\delta\mathbold{u} + \mathbold{d}_{\mathbold{u}}) + d_{\perp} \hat{\mathbold{n}}^{\perp}_{g_p}. \label{eq:preturbed_tracking_eqn}
\end{equation}
with initial condition $\mathbold{e}(0) = \hat{\mathbold{e}}(0)$. For a physical robotic system, we assume that the dynamics discrepancy is bounded.

\begin{ass} \label{ass:assump1}
    The input (matched) disturbance $\mathbold{d}_{\mathbold{u}}:\hat{\mathcal{D}}^{\mathbold{e}} \to \mathbb{R}^2$ and the unmatched drift ${d}_{\perp}:\hat{\mathcal{D}}^{\mathbold{e}} \to \mathbb{R} $ are bounded.
\end{ass} 

For safety, the system must be robust up to a user-defined risk tolerance against the worst-case discrepancies. 
Therefore, we aim to identify matched and unmatched discrepancy bounds for both $\|\mathbold{d}_{\mathbold{u}}\|$ and $|{d}_{\perp}|$ using conformal prediction from the finite training data. Despite \citep{sadinle2019least} addresses that conformal prediction can produce the desired minimum model discrepancy upper bounds \eqref{eq:minimum_model_discrepancy}, the assumption that the sampled nonconformity score satisfying \eqref{eq:smallest_cp} is not necessarily valid. Nevertheless, conformal prediction set coverage guarantees to hold regardless of the accuracy of the sampled nonconformity scores computed assuming zero model discrepancy, as they are designed for uncalibrated models \citep{ren2023robots}. 

Specifically, we seek to identify the probabilistic bounds on the matched and unmatched disturbances, satisfying:
\begin{align} \label{eq:desired_bounds_u}
    &Z_{\epsilon} \geq \min_{z\in \mathbb{R}}\{\mathbb{P}_{\hat{\Omega}}[|\|\mathbold{d}_{\mathbold{u}}(E)\| \leq z] \geq 1-\epsilon \},\\\label{eq:desired_bounds_uperp}
   &Z_{\epsilon}^{\perp} \geq \min_{z\in \mathbb{R}}\{\mathbb{P}_{\hat{\Omega}}[|{d}_{{\perp}}(E)| \leq z] \geq 1-\epsilon \},
\end{align}
where $\epsilon \in (0,1)$ is the user-defined risk for all random variables $E\in \hat{\Omega}$. Physically speaking, $E$ denotes a random sample of polar tracking error over the set of all possible polar tracking errors by ${\Omega}$. The following subsection details how to find $\|\mathbold{d}_{\mathbold{u}}\|$ and $|{d}_{\perp}|$ from the training data and how to calculate $Z_{\epsilon}$ and $Z_{\epsilon}^{\perp}$ using conformal prediction.

\subsection{Conformal-Driven Discrepancies}
 For each training state-input tuple, we calculate a tracking error $\delta\hat{\mathbold{e}}_{i\Delta T}$,  and their the discrepancy bounds $\|\mathbold{d}_{\mathbold{u}}^{i}\|$ and 
 $\|d_{\perp}^i\|$ given samples $i$ by solving the following equations in the given order:
\begin{align} \label{eq:du}
    &\|\mathbold{d}_{\mathbold{u}}^{i}\|\! =\! \max_{\mathbold{d}_{u} \in \mathbb{R}^2} \!
         \Bigg\| |\delta{\hat{\mathbold{e}}}_{i\Delta T}| \!- \Big|\int_{t_{i-1}}^{t_i} g_{p}(\mathbold{e}(\tau))d\tau \mathbold{d}_{\mathbold{u}}\Big|\Bigg\|_2 , \\ \label{eq:du_perp}
    &|{d}_{\perp}^i| = \Bigg\| |\delta{\hat{\mathbold{e}}}_{i\Delta T}|- \Big|\int_{t_{i-1}}^{t_i} g_{p}(\mathbold{e}(\tau))d\tau \mathbold{d}_{\mathbold{u}}^i\Big| \Bigg\|_2.
\end{align}
Since input disturbance $\mathbold{d}_{\mathbold{u}}$ can be actively compensated by control input $\mathbold{u}$, we identify the largest possible input discrepancy and allocate the remaining model mismatches to the unmatched drift.

Assume that the training data is sufficiently rich.
We randomly sub-sample $L$ values of $\|\mathbold{d}_{\mathbold{u}}^i\|$ and $\|\mathbold{d}_{\mathbold{u}^{\perp}}^i\|$ without replacement where $0 \ll L<N_T$. Algorithm \ref{alg:1} uses training data set $S_{N_T}$ to identify a $(1-\epsilon)$ upper bound $(Z_{\epsilon},Z^{\perp}_{\epsilon})$ defined in \eqref{eq:desired_bounds_u} and \eqref{eq:desired_bounds_uperp} using the non-conformity scores $\{\|\mathbold{d}_{\mathbold{u}}^i\|\}_{i=0}^{L-1}$ and $\{\|\mathbold{d}_{\mathbold{u}^{\perp}}^i\|\}_{i=0}^{L-1}$. 
\begin{algorithm}[!h]
\small
\caption{Conformal-Driven Discrepancies}
\KwData{Risk $\epsilon$, sub-sample $L$, training set $\{S_{i}\}_{i=1}^{N_T}$.}

\KwResult{$Z_{\epsilon}$, $Z_{\epsilon}^{\perp}$ defined by \eqref{eq:desired_bounds_u}-\eqref{eq:desired_bounds_uperp}.}


Compute index $q_{\epsilon} = \lceil (L+1)(1-\epsilon) \rceil$.

\For{$j$ from $1$ to $L$}{
Sample $\tilde{\mathbold{e}}$ without replacement and use \eqref{eq:du} and \eqref{eq:du_perp} to compute $\|\mathbold{d}_{\mathbold{u}}^j\|$, $|d_{\perp}^j|$.}

Add $\infty$ to the non-conformity scores $\{\|\mathbold{d}_{\mathbold{u}}^i\|\}_{j=0}^{L-1}$ and $\{\|\mathbold{d}_{\mathbold{u}^{\perp}}^j\|\}_{j=0}^{L-1}$.

Sort $\{\|\mathbold{d}_{\mathbold{u}}^j\|\}_{j=1}^{L+1}$ and $\{|d_{\perp}^j|\}_{j=1}^{L+1}$ in a non-decreasing orders. 

$Z_{\epsilon}$, $Z_{\epsilon}^{\perp}$ are the $q_{\epsilon} ^{th}$ smallest value in the sorted lists.
\label{alg:1} 
\normalsize
\end{algorithm}


Implicitly, the choice of the non-conformity score assumes the discrepancy noise is uncolored and zero mean. Such an assumption always leads to more conservative estimates of the discrepancy upper bound for safety-critical applications. If the noise is known to be colored, a new nonconformity score that offsets the expected discrepancies, $\|\tilde{f}(X) - \mathbb{E}_{\Omega}[\tilde{f}(X)]\|_2$,  could lead to less conservative identified upper bounds. 

In summary, we introduce a distribution-free method to identify the probabilistic upper bounds for closed-loop tracking input discrepancies $Z_{\epsilon}$ and the unmatched drift $Z^{\perp}_{\epsilon}$. The following section augments the ancillary controller using the data-driven discrepancy bounds to identify the associated maximum trajectory tracking error.

\section{Maximum Tracking Error Tubes} \label{sec:maximum_discpreancy_tube}
\noindent In addition to the identified error dynamics $\eqref{eq:preturbed_tracking_eqn}$, we also introduce the following perturbed variations of the nominal error dynamics $\eqref{eq:polar_eom}$,
\begin{align} 
    \dot{\tilde{\mathbold{e}}} &= g_{p}(\tilde{\mathbold{e}})(\delta\mathbold{u}+ \mathbold{d}_{\mathbold{u}}), \label{eq:perturbed_dyn_lemma1}\\
    \dot{\overline{\mathbold{e}}} &= g_{p}(\overline{\mathbold{e}})\delta\mathbold{u} + {d}^{\perp} \hat{\mathbold{n}}_{g_p}^{\perp},  \label{eq:perturbed_dyn_lemma2}
\end{align}
where $\mathbold{e}(0) = \tilde{\mathbold{e}}(0) = \overline{\mathbold{e}}(0) =\hat{\mathbold{e}}(0)$. The perturbed dynamics \eqref{eq:perturbed_dyn_lemma1} corresponds to the case where there are no unmatched discrepancies, i.e. $d^{\perp}(t) = 0, \forall t$. Similarly, the dynamics of $\overline{\mathbold{e}}$ corresponds to the case where there is no matched uncertainty along the controllable subspace, i.e. $\mathbold{d}_{\mathbold{u}}(t) = 0, \forall t$.
To study the effect of the discrepancies, we introduce the following lemmas, which leverage ideas from input-to-state stability.
\begin{lemma} \label{lem:1}
    Consider the perturbed error dynamics \eqref{eq:perturbed_dyn_lemma1}. Assume a positive constant $Z_\epsilon$ exists and satisfies Eq.  \eqref{eq:desired_bounds_u} for $t\in [0,\Delta T]$ as $\epsilon \to 0$. Define the set:
    \begin{equation} \label{eq:omega_iss}
        \Omega_{ISS} = \Big\{\tilde{\mathbold{e}} \in \hat{\mathcal{D}}^{\tilde{\mathbold{e}}} \,\Big|\, V(\tilde{\mathbold{e}}) \leq Z_{\epsilon}^2/4\Big\},
    \end{equation}
where $V$ is the positive definite function \eqref{eq:Layp}.
Consider the augmented ancillary control law,
\begin{equation} \label{eq:kappa_iss}
    \kappa_{ISS}(\tilde{\mathbold{e}}) = \kappa(\tilde{\mathbold{e}}) - \frac{1}{\lambda_1}g_p(\tilde{\mathbold{e}})^{T} \tilde{\mathbold{e}}.
\end{equation} 
For any $\tilde{\mathbold{e}}(0) \in \Omega_{ISS}$, if there exists a $\lambda_1 >  0$ such that applying the augmented control law \eqref{eq:omega_iss} yields the inequality $-\alpha_{3}(\alpha_2^{-1}(V(\tilde{\mathbold{e}}))) + \lambda_1 V(\tilde{\mathbold{e}}) \leq -\tilde{\alpha}_3({V}(\tilde{\mathbold{e}}))$, where $\tilde{\alpha}_3$ is a class $\mathcal{K}_{\infty}$ function, for all $\tilde{\mathbold{e}} \in \Omega_{ISS}$, then $\psi_{\tilde{\mathbold{e}}}(t,\kappa_{ISS}(\tilde{\mathbold{e}}),\tilde{\mathbold{e}}(0)) \in \Omega_{ISS}(t)$ \footnote{$\psi_{\tilde{\mathbold{e}}}(t,\kappa_{ISS}(\tilde{\mathbold{e}}),\tilde{\mathbold{e}}(0))$ is the solution (flow) of $\tilde{\mathbold{e}}$ governed by eqn. \eqref{eq:perturbed_dyn_lemma1} under the augmented control law $\kappa_{ISS}$ with the initial value $\psi(0) = \tilde{\mathbold{e}}(0)$.} for all $t\in (0, \Delta T)$.
\end{lemma}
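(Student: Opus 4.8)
The plan is to establish forward invariance of the sublevel set $\Omega_{ISS} = \{V(\tilde{\mathbold{e}}) \le Z_\epsilon^2/4\}$ by showing that the control Lyapunov function $V$ from \eqref{eq:Layp} is strictly decreasing along trajectories of the matched-disturbance dynamics \eqref{eq:perturbed_dyn_lemma1} on the boundary of $\Omega_{ISS}$ when the augmented controller \eqref{eq:kappa_iss} is applied. First I would differentiate $V$ along \eqref{eq:perturbed_dyn_lemma1} with $\delta\mathbold{u} = \kappa_{ISS}(\tilde{\mathbold{e}})$, giving
\[
\dot V = \nabla V^T g_p(\tilde{\mathbold{e}})\Big(\kappa(\tilde{\mathbold{e}}) - \tfrac{1}{\lambda_1}g_p(\tilde{\mathbold{e}})^T\tilde{\mathbold{e}} + \mathbold{d}_{\mathbold{u}}\Big),
\]
and split it into a nominal term $\nabla V^T g_p\kappa$, a robustifying term, and a disturbance term $\nabla V^T g_p\mathbold{d}_{\mathbold{u}}$.

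Next I would bound each piece. The nominal term reproduces the decrease from Lemma \ref{lemma:1}, so $\nabla V^T g_p\kappa = \dot V_{nom} \le -\alpha_3(\|\tilde{\mathbold{e}}\|)$; combining with $\|\tilde{\mathbold{e}}\| \ge \alpha_2^{-1}(V)$ and monotonicity of $\alpha_3$ yields $\dot V_{nom} \le -\alpha_3(\alpha_2^{-1}(V))$. Writing $\mathbold{b} \triangleq g_p(\tilde{\mathbold{e}})^T\nabla V$, the disturbance term is $\mathbold{b}^T\mathbold{d}_{\mathbold{u}}$. Taking $\epsilon \to 0$ so that $\|\mathbold{d}_{\mathbold{u}}\| \le Z_\epsilon$ holds deterministically on $[0,\Delta T]$ per \eqref{eq:desired_bounds_u}, the robustifying term dominates it through completing the square, $-\tfrac{1}{\lambda_1}\|\mathbold{b}\|^2 + \mathbold{b}^T\mathbold{d}_{\mathbold{u}} \le \tfrac{\lambda_1}{4}\|\mathbold{d}_{\mathbold{u}}\|^2 \le \tfrac{\lambda_1}{4}Z_\epsilon^2$, producing the key estimate $\dot V \le -\alpha_3(\alpha_2^{-1}(V)) + \tfrac{\lambda_1}{4}Z_\epsilon^2$ throughout $\Omega_{ISS}$.

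The crucial observation is that on the boundary $\partial\Omega_{ISS} = \{V = Z_\epsilon^2/4\}$ one has $\tfrac{\lambda_1}{4}Z_\epsilon^2 = \lambda_1 V$, so the hypothesis $-\alpha_3(\alpha_2^{-1}(V)) + \lambda_1 V \le -\tilde\alpha_3(V)$ gives $\dot V \le -\tilde\alpha_3(V) < 0$ there, since $\tilde\alpha_3 \in \mathcal{K}_{\infty}$ and $V = Z_\epsilon^2/4 > 0$. Because $\dot V < 0$ everywhere on the boundary of the sublevel set, no trajectory starting in $\Omega_{ISS}$ can exit outward, and a Nagumo-type sublevel-set invariance argument then establishes $\psi_{\tilde{\mathbold{e}}}(t, \kappa_{ISS}(\tilde{\mathbold{e}}), \tilde{\mathbold{e}}(0)) \in \Omega_{ISS}$ for all $t \in (0,\Delta T)$.

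I expect the main obstacle to be the completing-the-square step, specifically reconciling the robustifying term as written, $-\tfrac{1}{\lambda_1}g_p^T\tilde{\mathbold{e}}$, with the gradient $\nabla V = [\rho,\gamma,k_3\delta]^T$ that appears in $\dot V$: the product $-\tfrac{1}{\lambda_1}\nabla V^T g_p g_p^T\tilde{\mathbold{e}}$ is a cross term $-\tfrac{1}{\lambda_1}\mathbold{b}^T\mathbold{c}$ with $\mathbold{c} \triangleq g_p^T\tilde{\mathbold{e}} \neq \mathbold{b}$ unless $k_3 = 1$. I would handle this either by using $\nabla V$ in place of $\tilde{\mathbold{e}}$ in \eqref{eq:kappa_iss} so the term collapses to the clean quadratic $-\tfrac{1}{\lambda_1}\|\mathbold{b}\|^2$, or by propagating the positive-definite weighting $P = \mathrm{diag}(1,1,k_3)$ relating $\nabla V = P\tilde{\mathbold{e}}$ and absorbing the resulting constant into $\lambda_1$. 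A secondary point to verify is that $Z_\epsilon$ uniformly upper-bounds $\|\mathbold{d}_{\mathbold{u}}\|$ over the short interval, which is precisely the content of \eqref{eq:desired_bounds_u} in the $\epsilon \to 0$ limit.
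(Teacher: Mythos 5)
Your proposal is correct and follows essentially the same route as the paper's proof: the same completing-the-square bound $-\tfrac{1}{\lambda_1}\|g_p(\tilde{\mathbold{e}})^T\tilde{\mathbold{e}}\|_2^2 + |\tilde{\mathbold{e}}^T g_p(\tilde{\mathbold{e}})\mathbold{d}_{\mathbold{u}}| \le \tfrac{\lambda_1}{4}Z_\epsilon^2$, followed by the hypothesis $-\alpha_3(\alpha_2^{-1}(V)) + \lambda_1 V \le -\tilde{\alpha}_3(V)$ to conclude invariance of the sublevel set, with your boundary/Nagumo phrasing being a slightly more careful rendering of the paper's inequality chain (which substitutes $\tfrac{\lambda_1}{4}Z_\epsilon^2 \le \lambda_1 V$, an inequality valid only where $V \ge Z_\epsilon^2/4$, i.e.\ on or outside the boundary). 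The $k_3$ mismatch you flag at the end is real but affects the paper equally: its proof also writes the cross terms as $\tilde{\mathbold{e}}^T g_p$ rather than $\nabla V^T g_p$, implicitly taking $k_3 = 1$ or absorbing the weighting $\mathrm{diag}(1,1,k_3)$ into the constants, exactly as you propose handling it.
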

\begin{proof}
    From the definition of the positive definite function $V$~\eqref{eq:Layp} and with the augmented controller $\kappa_{ISS}$, we have
    \begin{align*}
        \dot{V} &\leq -\alpha_3(\|\tilde{\mathbold{e}}\|) -\frac{1}{\lambda_1}\tilde{\mathbold{e}}^{T}g_p(\tilde{\mathbold{e}})g_p(\tilde{\mathbold{e}})^{T} \tilde{\mathbold{e}} +  \tilde{\mathbold{e}}^{T}g_p(\tilde{\mathbold{e}})\mathbold{d}_{\mathbold{u}} \\
        &\leq -\alpha_3(\|\tilde{\mathbold{e}}\|)  -\frac{1}{\lambda_1}\|g_p(\tilde{\mathbold{e}})^{T}\tilde{\mathbold{e}}\|_2^2 +  |\tilde{\mathbold{e}}^{T}g_p(\tilde{\mathbold{e}})\mathbold{d}_{\mathbold{u}}| \\
        &\leq -\alpha_3(\|\tilde{\mathbold{e}}\|) -\frac{1}{\lambda_1}\Big\|g_p^{T}\tilde{\mathbold{e}} + \frac{\lambda_1}{2} \mathbold{d}_{\mathbold{u}}\Big\|^2_2 +  \frac{\lambda_1 (Z_{\epsilon})^2}{4}\\
        &\leq  -\alpha_3(\|\tilde{\mathbold{e}}\|) + \frac{\lambda_1 (Z_{\epsilon})^2}{4}. 
    \end{align*}
Given $\lambda_1> 0$ exists,  the following inequalities hold 
\begin{align*}
    \dot{V} &\leq -\alpha_{3}(\|\tilde{\mathbold{e}}\|) + \lambda_1 V(\tilde{\mathbold{e}}) \\&\leq -\alpha_{3}(\alpha_2^{-1}(V(\tilde{\mathbold{e}})) + \lambda_1 V(\tilde{\mathbold{e}}) \leq -\tilde{\alpha}_3(V(\tilde{\mathbold{e}}))
\end{align*} 
given perturbed dynamics \eqref{eq:perturbed_dyn_lemma1}. With $V(\mathbold{e}(0)) \leq \frac{Z_{\epsilon}^2}{4}$, we have $V(\tilde{\mathbold{e}}(t)) \leq \frac{Z_{\epsilon}^2}{4}, \forall t$, i.e. $\psi_{\tilde{\mathbold{e}}}(t,\kappa_{ISS}(\tilde{\mathbold{e})},\tilde{\mathbold{e}}(0))\in \Omega_{ISS}$. 
\end{proof}

It is important to recognize the set $\Omega_{ISS}$ is a Robust Control Invariant (RCI) set. Lemma \ref{lem:1} only considers the case where all disturbances lie in the controllable subspace. For the complementary case governed by \eqref{eq:perturbed_dyn_lemma2}, we leverage the Gronwall-Bellman inequality to deduce the following result.
\begin{lemma} \label{lemma:2}
Under the nominal dynamical system \eqref{eq:polar_eom}, the perturbed dynamical system \eqref{eq:perturbed_dyn_lemma2} controlled by the nominal feedback control law $\mathbold{u} = \kappa(\mathbold{e})$ in Eqn. \eqref{eq:fb_law} has the closed-loop dynamics
\begin{align*}
    \dot{\mathbold{e}} &= g_p(\mathbold{e})\kappa(\mathbold{e}) = f_{cl}(\mathbold{e}),\\
    \dot{\overline{\mathbold{e}}} &= g_p(\overline{\mathbold{e}})\kappa(\overline{\mathbold{e}}) + {d}^{\perp} \hat{\mathbold{n}}_{g_p}^{\perp} = {f}_{cl}(\overline{\mathbold{e}}) +  {d}^{\perp} \hat{\mathbold{n}}_{g_p}^{\perp}.
\end{align*}
Let $Z_{\epsilon}^{\perp}\in \mathbb{R}_{>0}$ satisfy \eqref{eq:desired_bounds_uperp}. Suppose there exists a Lipschitz constant $l_{cl}$ with respect to the domain $\hat{\mathcal{D}}^{\mathbold{e}}$ for function $f_{cl}$. 
Let $\psi_{\mathbold{e}}(t,\kappa(\mathbold{e}),\mathbold{e}(0))$ and $\psi_{\overline{\mathbold{e}}}(t,\kappa(\overline{\mathbold{e}})),\overline{\mathbold{e}}(0))$ be the flows of the systems \eqref{eq:polar_eom} and \eqref{eq:perturbed_dyn_lemma2}. The following inequality holds for all $t\in [0, \Delta T]$:
\begin{multline} \label{eq:upperbound}
    \| \psi_{\mathbold{e}}(t,\kappa(\mathbold{e}),\mathbold{e}(0)) - \psi_{\overline{\mathbold{e}}}(t,\kappa(\overline{\mathbold{e}})),\overline{\mathbold{e}}(0))\| \leq \\Z^{\perp}_{\epsilon} t e^{l_{cl} t} + \|\mathbold{e}(0))-\overline{\mathbold{e}}(0))\|e^{l_{cl}t}.
\end{multline} 
\end{lemma}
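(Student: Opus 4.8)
The plan is to recognize this as a standard ODE perturbation estimate and prove it with the Gr\"onwall--Bellman inequality, exactly as the statement anticipates. First I would write both trajectories in integral form: $\mathbold{e}(t) = \mathbold{e}(0) + \int_0^t f_{cl}(\mathbold{e}(\tau))\,d\tau$ for the nominal closed loop, and $\overline{\mathbold{e}}(t) = \overline{\mathbold{e}}(0) + \int_0^t \big( f_{cl}(\overline{\mathbold{e}}(\tau)) + d^{\perp}(\tau)\,\hat{\mathbold{n}}_{g_p}^{\perp}\big)\,d\tau$ for the perturbed system \eqref{eq:perturbed_dyn_lemma2}. Keeping the initial conditions distinct (even though the setup sets $\mathbold{e}(0)=\overline{\mathbold{e}}(0)$) is deliberate, since the general form lets this estimate be composed downstream with the matched-disturbance result of Lemma~\ref{lem:1}.

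Next I would subtract the two integral equations, take the Euclidean norm, and apply the triangle inequality to obtain
\[
\|\mathbold{e}(t) - \overline{\mathbold{e}}(t)\| \le \|\mathbold{e}(0) - \overline{\mathbold{e}}(0)\| + \int_0^t \|f_{cl}(\mathbold{e}(\tau)) - f_{cl}(\overline{\mathbold{e}}(\tau))\|\,d\tau + \int_0^t |d^{\perp}(\tau)|\,\|\hat{\mathbold{n}}_{g_p}^{\perp}\|\,d\tau.
\]
I would then bound the three contributions separately. The Lipschitz hypothesis on $f_{cl}$ over $\hat{\mathcal{D}}^{\mathbold{e}}$ gives $\|f_{cl}(\mathbold{e}) - f_{cl}(\overline{\mathbold{e}})\| \le l_{cl}\,\|\mathbold{e} - \overline{\mathbold{e}}\|$; the unit-vector property $\|\hat{\mathbold{n}}_{g_p}^{\perp}\| = 1$ together with the discrepancy bound $|d^{\perp}(\tau)| \le Z_{\epsilon}^{\perp}$ (valid on the probability-$(1-\epsilon)$ event guaranteed by \eqref{eq:desired_bounds_uperp}) bounds the forcing integral by $Z_{\epsilon}^{\perp} t$. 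Setting $u(t) \triangleq \|\mathbold{e}(t) - \overline{\mathbold{e}}(t)\|$ and collecting terms yields the integral inequality $u(t) \le \big(\|\mathbold{e}(0)-\overline{\mathbold{e}}(0)\| + Z_{\epsilon}^{\perp} t\big) + l_{cl}\int_0^t u(\tau)\,d\tau$.

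Finally I would invoke Gr\"onwall--Bellman in its variant with a time-dependent, non-decreasing forcing term $\alpha(t) = \|\mathbold{e}(0)-\overline{\mathbold{e}}(0)\| + Z_{\epsilon}^{\perp} t$ and constant kernel $l_{cl}$, which returns $u(t) \le \alpha(t)\,e^{l_{cl} t}$; expanding $\alpha(t)$ reproduces the claimed bound \eqref{eq:upperbound} for all $t \in [0,\Delta T]$. The main subtlety, rather than a genuine obstacle, is selecting the right Gr\"onwall statement: the forcing here is affine in $t$ rather than constant, so the monotone-forcing version must be used, which is legitimate precisely because $Z_{\epsilon}^{\perp} > 0$ makes $\alpha$ non-decreasing. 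Two auxiliary points each deserve a sentence: I should confirm that both flows remain inside $\hat{\mathcal{D}}^{\mathbold{e}}$ over $[0,\Delta T]$ so that the single constant $l_{cl}$ applies uniformly along the trajectories, and I should note that the resulting deterministic tube bound inherits the confidence level $1-\epsilon$ from the conformal bound on $|d^{\perp}|$.
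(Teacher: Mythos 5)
Your proposal is correct and follows essentially the same route as the paper's proof: both write the two flows in integral form via forward integration, apply the triangle inequality, bound the drift term by $Z_{\epsilon}^{\perp}t$ using \eqref{eq:desired_bounds_uperp} and the Lipschitz property of $f_{cl}$, and then invoke the Gr\"onwall--Bellman inequality to obtain \eqref{eq:upperbound}. Your write-up is in fact slightly more careful than the paper's, since you make explicit the monotone-forcing variant of Gr\"onwall being used and the need for both flows to remain in $\hat{\mathcal{D}}^{\mathbold{e}}$, but the underlying argument is identical.
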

\begin{proof} 
By forward integration, the flows satisfy
    \begin{align*}
       &\| \psi_{\mathbold{e}}(t,\kappa(\mathbold{e}),\mathbold{e}(0)) - \psi_{\overline{\mathbold{e}}}(t,\kappa(\overline{\mathbold{e}})),\overline{\mathbold{e}}(0))\| \\ & \leq\|\mathbold{e}(0))-\overline{\mathbold{e}}(0))\| \! + \!\! \int_{0}^{t}\!\! \big\|f_{cl}(\mathbold{e}) \!-\! f_{cl}(\overline{\mathbold{e}}) \!-\! {d}^{\perp} \hat{\mathbold{n}}_{g_p}^{\perp}\big\| d\tau  
        \\ &\leq \|\mathbold{e}(0))-\overline{\mathbold{e}}(0))\|+\!\! \int_{0}^{t} \!\!l_{cl} \| \mathbold{e}(\tau) - \overline{\mathbold{e}}(\tau)\| d\tau\!  +\! Z^{\perp}_{\epsilon} t.
    \end{align*}
The Gronwall-Bellman Inequality \citep{GB_inequality} yields \eqref{eq:upperbound}.
\end{proof}

\begin{figure}
\centering
    \includegraphics[width=0.99\linewidth]{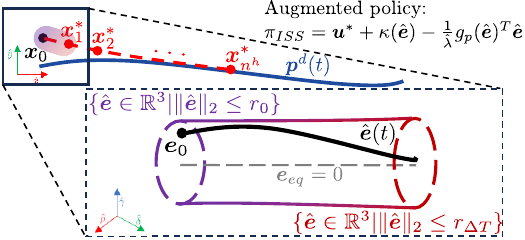}
    \caption{Given reference path $\mathbold{p}^d(t)$ and optimal trajectory $\{\mathbold{x}^*_k, \mathbold{u}^*_k\}_{k=1}^{n_h}$, the maximum tracking error under augmented policy $\pi_{ISS}$ is depicted. By converting the optimal state $\mathbold{x}_1^*$ and current state $\mathbold{x}_0$ into polar coordinate error $\mathbold{e}_0$, the tracking problem reduces to the stabilization of the error $\mathbold{e}_{eq} = 0$. Theorem \ref{thm:main_thm} proves that if $\|\mathbold{e}_0\|_2 \leq r_0$, then the maximum tracking deviation during $t \in [0,\Delta T]$ is $\sup_{t\in [0,\Delta T]} \|\mathbold{e}(t)\|_2$ is upper bounded by $ r_{\Delta T}$.}
    \label{fig:illu}
\end{figure}

Intuitively speaking, the result of Lemma \ref{lemma:2} states that if there is unmatched drift present, then the closed-loop system trajectory between the perturbed and nominal system grows exponentially in time. Building up on Lemma \ref{lemma:1} and Lemma \ref{lemma:2}, our main contribution deduces an \textit{augmented} ancillary controller and the associated maximum discrepancy tube for the perturbed system \eqref{eq:preturbed_tracking_eqn}.

\begin{theorem} \label{thm:main_thm}
    Suppose the system \eqref{eq:polar_eom} is perturbed by input disturbance $\mathbold{d}_{\mathbold{u}}$ and unmatched drift $d_{\perp}$, as in \eqref{eq:preturbed_tracking_eqn}, where the disturbances $\mathbold{d}_{\mathbold{u}}$ and $d_{\perp}$ satisfy Assumption \ref{ass:assump1} given domain $\mathcal{D}^{\mathbold{e}}$.  Given \mbox{$\epsilon \to 0$, let $Z_{\epsilon}$} and $Z_{\epsilon}^{\perp}$ for \mbox{$t\in [0, \Delta T]$} being the discrepancy upper bounds that also satisfy \eqref{eq:desired_bounds_u} and \eqref{eq:desired_bounds_uperp}, respectively. 
    
 Let $l_{V}$ be the Lipschitz constant for the function ${V}$ in Eqn. \eqref{eq:Layp} on the bounded set $\hat{\mathcal{D}}^{\mathbold{e}}$.
Let $\Delta T$ be small enough to satisfy $Z_{\epsilon}^{\perp} \Delta T \exp(l_V\Delta T) \leq \alpha_1$. Consider a time varying radius $r(\tau)$ that satisfies 
\begin{equation} \label{eq:main_result}
  r(\tau) = r_{\tau} \triangleq  \frac{(Z_{\epsilon})^2}{4(\alpha_1 - Z_{\epsilon}^{\perp} \tau e^{l_V \tau})}. 
\end{equation}

Consider the augmented ancillary controller \eqref{eq:kappa_iss} and error dynamics $\dot{\mathbold{e}} \triangleq g_p(\mathbold{e})(\kappa_{ISS}(\mathbold{e}) + \mathbold{d}_{\mathbold{u}})$ and $\dot{\hat{\mathbold{e}}} \triangleq g_p(\hat{\mathbold{e}})(\kappa_{ISS}(\hat{\mathbold{e}}) + \mathbold{d}_{\mathbold{u}}) + {d}^{\perp} \hat{\mathbold{n}}_{g_p}^{\perp}$, where $\mathbold{d}_{\mathbold{u}}(0) = 0$, ${d}_{\perp}(0) = 0$ and $\|\mathbold{e}(0)\| \leq r(0)$. The existence of a $\hat{\lambda}_1 >  0$ such that $-{\alpha}_{3}(\alpha_2^{-1}(V(\hat{\mathbold{e}}))) + \hat{\lambda}_1 V(\hat{\mathbold{e}}) \leq -\hat{\alpha}_3(V(\hat{\mathbold{e}}))$ is true for all $t\in [0,\Delta T]$, where $\hat{\alpha}_3$ is a class $\mathcal{K}_{\infty}$ function, implies that the flow of the closed-loop \footnote{Referring to the error dynamics under augmented control law $\mathbold{u} = \kappa_{ISS}(\hat{e})$ \eqref{eq:kappa_iss} with $\hat{\lambda}_1$ replacing $\lambda_{1}$} tracking error $\hat{\mathbold{e}}(t)$ is bounded by $\|\hat{\mathbold{e}}(t)\| \leq r(t)$  over $t\in [0,\Delta T]$.
\end{theorem}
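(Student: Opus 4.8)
The plan is to combine the two preceding lemmas by superposing the matched and unmatched mechanisms and tracking the Lyapunov function $V$ along the fully perturbed flow $\hat{\mathbold{e}}$. First I would dispatch the matched disturbance exactly as in Lemma \ref{lem:1}: substituting the augmented controller $\kappa_{ISS}$ into $\dot{V}$ and completing the square against $\mathbold{d}_{\mathbold{u}}$ (using $\|\mathbold{d}_{\mathbold{u}}\|\le Z_\epsilon$) yields the contraction estimate $\dot{V}\le-\alpha_3(\|\hat{\mathbold{e}}\|)+\tfrac{\hat{\lambda}_1 Z_\epsilon^2}{4}$, which the hypothesis $-\alpha_3(\alpha_2^{-1}(V))+\hat{\lambda}_1 V\le-\hat{\alpha}_3(V)$ converts into $\dot{V}\le-\hat{\alpha}_3(V)$ on the stationary tube $\Omega_{ISS}=\{V\le Z_\epsilon^2/4\}$. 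At $\tau=0$ this already reproduces $r(0)=Z_\epsilon^2/(4\alpha_1)$ once the level-set bound is pushed to a norm bound through $\alpha_1\|\hat{\mathbold{e}}\|\le V$, so $\Omega_{ISS}$ is contained in the ball of radius $r(0)$.

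Next I would account for the unmatched drift $d^\perp \hat{\mathbold{n}}_{g_p}^\perp$ as a perturbation of the matched-only flow, mirroring Lemma \ref{lemma:2}. Introducing the auxiliary trajectory $\tilde{\mathbold{e}}$ governed by \eqref{eq:perturbed_dyn_lemma1} under $\kappa_{ISS}$ from the common initial condition, I would express $\hat{\mathbold{e}}-\tilde{\mathbold{e}}$ as the integral of the difference of the two closed-loop vector fields plus the integrated drift, apply Lipschitz continuity together with $|d^\perp|\le Z_\epsilon^\perp$, and invoke the Gronwall--Bellman inequality, with the Lipschitz constant $l_V$ of $V$ on $\hat{\mathcal{D}}^{\mathbold{e}}$ entering as the exponential rate. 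This produces a divergence contribution of the form $Z_\epsilon^\perp\tau\,e^{l_V\tau}$; propagating that growth back through the level-set-to-norm conversion inflates the effective constant $\alpha_1$ to $\alpha_1-Z_\epsilon^\perp\tau e^{l_V\tau}$ and so reproduces the time-varying radius \eqref{eq:main_result}. The standing assumption $Z_\epsilon^\perp\Delta T\,e^{l_V\Delta T}\le\alpha_1$ is precisely what keeps this denominator nonnegative, so $r(\tau)$ stays finite on $[0,\Delta T]$, and I would close the argument with a comparison (differential-inequality) lemma establishing $\|\hat{\mathbold{e}}(t)\|\le r(t)$ whenever $\|\mathbold{e}(0)\|\le r(0)$.

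The main obstacle I anticipate is that superposition is not valid for the nonlinear dynamics: with $\mathbold{d}_{\mathbold{u}}$ and $d^\perp$ acting simultaneously one cannot simply add the Lemma \ref{lem:1} tube to the Lemma \ref{lemma:2} divergence, so the cross-coupling between the contraction of $V$ and the drift-driven growth must be bounded carefully. The delicate step is to show that the \emph{moving} sublevel set of $V$ can be chosen so its boundary is never crossed outward, i.e. verifying $\dot{V}\le\dot{c}(\tau)$ on $\{V=c(\tau)\}$ for the growing level $c(\tau)=\alpha_1 r(\tau)$; this is exactly what pins down the rational form of $r(\tau)$ rather than a looser additive bound, and it is also where the correct identification of the Lipschitz constant driving the Gronwall exponent must be reconciled. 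A secondary bookkeeping subtlety is the translation between the norm ball $\|\mathbold{e}(0)\|\le r(0)$ and the Lyapunov tube $V(\mathbold{e}(0))\le Z_\epsilon^2/4$ through the class-$\mathcal{K}$ envelopes $\alpha_1,\alpha_2$, which must be handled consistently with the $\epsilon\to0$ limit used to freeze $Z_\epsilon$ and $Z_\epsilon^\perp$ over $[0,\Delta T]$.
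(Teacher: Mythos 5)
Your proposal follows essentially the same route as the paper's own proof: the matched disturbance is absorbed by the completing-the-square ISS argument of Lemma~\ref{lem:1}, the unmatched drift is handled by comparing the fully perturbed flow against the matched-only flow via the Gronwall--Bellman argument of Lemma~\ref{lemma:2} (with $l_V$ as the exponential rate), and the two effects are combined through the $\alpha_1$ level-set-to-norm conversion into a self-referential bound on $\sup_{\tau}\|\hat{\mathbold{e}}(\tau)\|$ whose algebraic solution yields the rational radius \eqref{eq:main_result}, with $Z_{\epsilon}^{\perp}\Delta T e^{l_V\Delta T}\leq \alpha_1$ keeping the denominator positive. The anticipated difficulties you flag (no superposition, reconciling the Gronwall rate, the norm-ball versus Lyapunov-tube bookkeeping) are precisely the points the paper resolves by the triangle inequality on $V$ and the sup-based fixed-point inequality, so the plan is sound and matches the published argument.
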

\begin{proof}
The continuous and differentiable function $\dot{V}$ is bounded over compact set $\hat{\mathcal{D}}^{\mathbold{e}}$ from Assumption \ref{ass:assump1}. Therefore, $V$ is Lipschitz continuous over domain $\hat{\mathcal{D}}^{\mathbold{e}}$.
If there exist $\tilde{\lambda}_1 >0$, then under $\kappa_{ISS}$ yields
    \begin{align*}
       \!\dot{V}(\hat{e}) \!\leq\!  -\alpha_3(\|\hat{\mathbold{e}}\|) \!+\! \frac{\hat{\lambda}_1 Z_{\epsilon}^2}{4} \!+\! |\hat{\mathbold{e}}^{T}\mathbold{d}_{\mathbold{u}}^{\perp}| 
       \!\leq\! -\hat{\alpha}_3(V(\hat{\mathbold{e}})) \!+\! \|\hat{\mathbold{e}}\|_2    Z_{\epsilon}^{\perp}\!.
    \end{align*}
The first inequality follows the proof of Lemma \ref{lem:1} with the addition of $\hat{\mathbold{e}}^{T}\mathbold{d}_u^{\perp} \leq |\hat{\mathbold{e}}^{T}\mathbold{d}_u^{\perp}|$. The second inequality holds from a combination of the existence of $\hat{\lambda}_1$, the definition of $Z_{\epsilon}^{\perp}$~\eqref{eq:desired_bounds_uperp}, and Holder's inequality. 
From Lemma \ref{lemma:2}, we have 
$$
\sup_{\tau \in [0,t]}|V(\mathbold{e}(\tau)) - V(\hat{\mathbold{e}}(\tau))| \leq \left(\sup_{\tau \in [0,t]}  \|\hat{\mathbold{e}}\|_2 \,Z_{\epsilon}^{\perp} \tau \right)e^{l_V\tau },
$$
where $|V(\mathbold{e}(0)) - V(\hat{\mathbold{e}}(0))|= 0$ because $\mathbold{e}(0) = \hat{\mathbold{e}}(0)$.
By adding and subtracting zero and triangle inequalities, we have inequalities $V(\hat{\mathbold{e}}(\tau)) \leq |V(\hat{\mathbold{e}}(\tau))| \leq |V(\mathbold{e}(\tau))| + |V(\hat{\mathbold{e}}(\tau)) - V(\mathbold{e}(\tau))|$.  Recall the robust control invariance set \eqref{eq:omega_iss}. Let the initial error radius $r(0)$ take the explicit value
\begin{equation} \label{eq:r0_definition}
    r(0) = \frac{(Z_{\epsilon})^2}{4\alpha_{1}^{-1} }.
\end{equation}
Further, since $\|\mathbold{e}(0)\| \leq r(0)$ and there exists a $\hat{\lambda}_1 >  0$, the unperturbed error dynamics satisfies $\|\mathbold{e}(t))\| \leq r(0)$ for all $t\in [0,\Delta T]$, by Lemma \ref{lemma:1}.  Combining these inequalities, we can show
\begin{align} \label{eq:r_dt_definition}
\sup_{\tau\in [0,t]}\!\! V(\hat{\mathbold{e}}(\tau)) \!\leq\! \alpha_1 r_0 \!+\!\left( \sup_{t\in [0,\Delta T]}\! \|\hat{\mathbold{e}}(t)\|Z^{\perp}_\epsilon \Delta T\right)\! e^{l_V \Delta T}.
\end{align}
because $V(\hat{\mathbold{e}}) \geq \alpha_1\|\hat{\mathbold{e}}\|$. Substituting this inequality into equation \eqref{eq:r_dt_definition}, we arrive at
\begin{equation} \label{eq:tb_combined}
   \!\!\!\sup_{\tau\in [0,t]} \!\|\hat{\mathbold{e}}(\tau)\| \!\leq \! r_0 \!+\!\! \sup_{\tau \in [0,t]}\!\! \|\hat{\mathbold{e}}(\tau)\|\frac{Z_{\epsilon}^{\perp}\tau  e^{l_V \tau}}{\alpha_1}.
\end{equation}
Lastly, combining equations \eqref{eq:tb_combined} and \eqref{eq:r0_definition} 
 leads to the desired maximum tracking error bound \eqref{eq:main_result}. 
\end{proof}
The increasing tube (radius-wise) defined by time-dependent radius $r(t)$ \eqref{eq:main_result} in the interval $t\in[0,\Delta T]$ is the maximum tracking error tube, illustrated in Figure \ref{fig:illu}. Note, every planned desired trajectory $\mathbold{x}^*(t)$ resulting from desired inputs $\mathbold{u}^*(t)$ starting at state $\mathbold{x}_0$ is equivalent to stabilizing the tracking error from $\mathbold{e}_0$ to $[0,0,0]^T$. Leveraging the result of Theorem~\ref{thm:main_thm}, the perturbed model tracking error $e(t)$ within each planning interval $t\in[0,\Delta T]$ will be inside a growing tube along the planned trajectory $\mathbold{x}^*(t)$ for $t\in [0,\Delta T]$ under the augmented controller $\kappa_{ISS}$, if the initial tracking error $\mathbold{e}_0$ is inside a ball of radius $r_0$ $\eqref{eq:r0_definition}$, as illustrated in Fig. \ref{fig:illu}. 

\begin{rem}
The condition $\epsilon \to 0$ can require $Z_{\epsilon}, Z_{\epsilon}^{\perp} \to \infty$ which is unpractical for robotic applications. Therefore for $\epsilon \in (0,1)$, one can translate the result of Theorem  \ref{thm:main_thm} as the inequality $\eqref{eq:main_result}$ is true for $(1-\epsilon)$ confidence level. 
\end{rem}



\section{Discrepancy-Aware Planning} \label{sec:dA_MPPI}

Because differential-drive ground vehicles are underactuated, the unmatched uncertainties cannot be compensated in a controller synthesize problem. Nevertheless, the closed-loop trajectory drift (Lemma \ref{lemma:2}) can be effectively modulated through trajectory re-planning such as through heading adjustments, as demonstrated by the tube-MPC method \citep{bastos2021energy}. 

To complete our framework, this section proposes a novel Discrepancy-Aware Planning algorithm that optimizes reference tracking while providing safety constraint satisfaction at a desired risk tolerance. We focus on two challenges: integrating maximum tracking error into the occupancy map to infer probabilistic safe traversal and finding optimal trajectories and inputs for the policy \eqref{eq:policy_over_arching}.


\begin{figure}
\centering
    \includegraphics[width=0.99\linewidth]{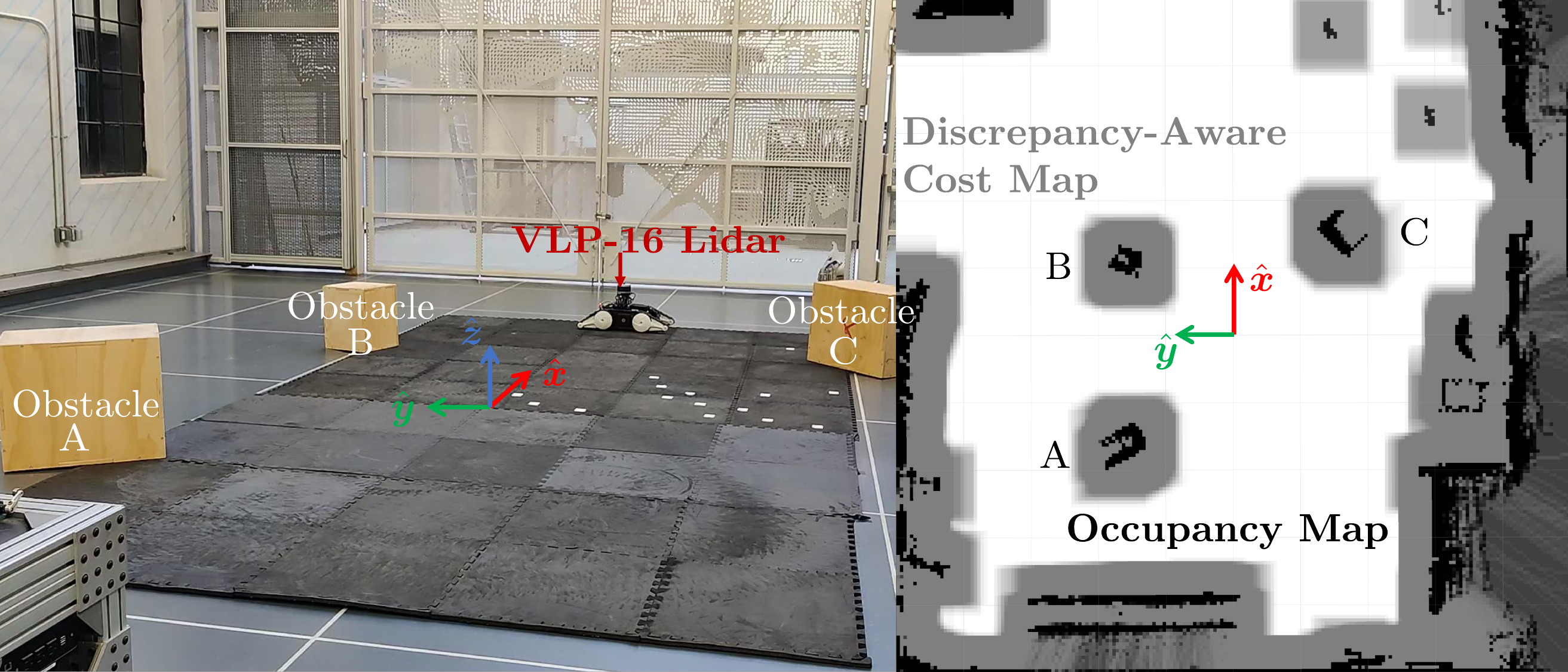}
    \caption{(Left) Configuration B test set up capture where three stationary obstacles are placed such that the vehicle needs to deviate from its reference trajectory to avoid. (Right) An overlay of the raw occupancy map and the discrepancy-aware cost map $\mathcal{C}_{\epsilon}$ with $\epsilon \in (0.0013,0.0016)$ resulting a $N_{\epsilon}= 16$ composed by $r_{ego} = 0.40~\mbox{m}$ and $r_{buff} = 0.15~\mbox{m}$ with $r_{map} = 0.05~\mbox{m}$.}
    \label{fig:maps}
\end{figure}

\subsection{Probabilistic Safety-Critical Planning}
Given a desired trajectory, the optimal tracking input can be found using a deterministic FTOCP with the cost:
\begin{equation} \label{eq:cost_standard}
   \! \mathcal{L}_{track} = \!\mathcal{L}_{T}(\delta\mathbold{p}_{t+n_h})\! + \!\sum_{k=0}^{n^{h}-1}\left( \mathcal{L}_s(\delta\mathbold{p}_k) + \frac{1}{2}\mathbold{u}_k^{T}R\mathbold{u}_k\right)
\end{equation}
where $\delta\mathbold{p}_{k} = \mathbold{p}_k - \mathbold{p}^{d}_k$ is the output (position) tracking error in a fixed frame. Functions $\mathcal{L}_{T} \triangleq \delta\mathbold{p}_{t+{n}^h}^{T}Q_{T}\delta\mathbold{p}_{t+{n}^h}$ and $\mathcal{L}_{s} \triangleq \delta\mathbold{p}_{k}^{T}Q\delta\mathbold{p}_{k}$ for $k\in \{0,\hdots,n_h-1\}$ are convex terminal and stage costs, respectively. Matrices $Q_T$, $Q$, $R\in \mathbb{R}^{2\times 2}$ are positive definite. Note, we apply a maximum tracking cost threshold, $\overline{\mathcal{L}}_{track} \in \mathbb{R}_{>0}$ \footnote{The parameter $\overline{\mathcal{L}}_{track}$ stops the cost-to-go from becoming unbounded as the output tracking error increases which needs to be tuned. The authors infer $\overline{\mathcal{L}}_{track}$ from the system's input limits and the planning horizon such that the maximum stage cost corresponding to the physical limits imposed by the robotic hardware.}, to the cost as $\mathcal{L}_{track} = \min(\mathcal{L}_{track}, \overline{\mathcal{L}}_{track})$. 

Given a vehicle operating state $X_t$ and its projected nominal state $\mathbold{x}_t$, the solution to the following discrete-time optimization problem yields the desired probabilistically safe input sequence
\begin{subequations} \label{eq:FTOCP}
\begin{align}
      \{ \mathbold{u}^{*}_{t+k}\}&_{k=0}^{n_h - 1}  =\!\!\!\!\min_{\tiny \begin{array}{c} 
           \{\mathbold{u}_{t+k}\}^{n_h -1}_{k=0} 
        \end{array}}\normalsize \!\!\! \sum_{k=0}^{n_h - 1} \mathcal{L}_{track} \\
     \mbox{s.t.} \quad       &X_{t+k+1} = \hat{f}^d_{t+k} (X_{t+k})
     \label{eq:discrete_time_full}\\
    & \mathbold{x}_{t+k+1} = P_{X\to \mathbold{x}}(X_{t+k+1}) \in \mathcal{D}^{\mathbold{u}}\label{eq:optimization_stateninput_constraint}\\
     & \mathbold{u}_{t+k} = P_{X\to \mathbold{u}}(X_{t+k}) \in \mathcal{D}^{\mathbold{u}}\label{eq:optimization_stateninput_constraint}    
     , \quad \forall k \in\mathbb{Z}_{0}^{n_h-1} \\
      &\mathbold{x}_{t} = P_{X\to \mathbold{x}}({\mathbold{x}}_{t}) \label{eq:initial_condition}\\
     &\mathbb{P}_{\Omega}[ M\mathbold{x}_{t+1} \in \overline{\mathcal{O}}] \leq \epsilon ,\label{eq:optimization_obsavoid_setconstraint}
\end{align}
\end{subequations}
 that avoids all occupied grids in the map $\mathcal{O}$ at a minimum probability $(1-\epsilon)$ percent. Equation \eqref{eq:discrete_time_full} represents a discrete-time update of the true system  operating state $X$ with time discretization step size $\Delta T$. The matrix $M\in \mathbb{R}^{2\times 3}$ maps from nominal states $\mathbold{x}$ to vehicle positions $\mathbold{p}$, and constraint \eqref{eq:initial_condition} means the initial condition is set by the current state.
 Inequality \eqref{eq:optimization_obsavoid_setconstraint} constrains the probability of the planned vehicle's position, at time $t+1$ given the current nominal state $\mathbold{x}_t$, to be inside an augmented occupied set, denoted as $\overline{\mathcal{O}} \subseteq \mathbb{R}^2$, is less than $\epsilon$. The set $\overline{\mathcal{O}}$ is constructed from $\mathcal{O}$ with the addition of the vehicle geometry such that if the point mass at position $\mathbold{p} \in \mathbb{R}^2$ satisfied $\mathbold{p} \notin \overline{\mathcal{O}}$, then the vehicle occupied points given center position $\mathbold{p}$ denoted as the set $\mathcal{S}_{\mathbold{p}} \subset \mathbb{R}^2$ satisfied $\mathcal{S}_{\mathbold{p}} \cap \mathcal{O} = \emptyset$. Physically speaking, the vehicle should not enter the obstacle-occupied grids given by the map $\mathcal{O}$ more frequently than the specified risk tolerance. 

The optimization problem \eqref{eq:FTOCP} is difficult to solve because the true system dynamics \eqref{eq:discrete_time_full}  is unknown and unobservable, constraint \eqref{eq:optimization_obsavoid_setconstraint} is nondeterministic, and the occupancy percentage given map $\mathcal{O}$ given continuous position $x,y$ is discontinuous. As a solution, we proposed to reformulate \eqref{eq:FTOCP} into a deterministic program while providing the same or lower obstacle collision tolerance $\epsilon$. 

Recall that polar coordinates state $\rho$ represents the displacement between the vehicle's actual position and the planned one. From Theorem~\ref{thm:main_thm}, we have
\begin{equation*}
    \sup_{t\in[0,\Delta]}|\hat{\rho}(t)| \!= \!\! \sup_{t\in[0,\Delta T]}\!\!\!|\mathbold{i}^{T}\hat{\mathbold{e}}(t)| \leq \|\mathbold{i}\| \!\!\sup_{t\in[0,\Delta T]} \!\!\!\|\hat{\mathbold{e}}(t)\|\!\leq\! r_{\Delta T},
\end{equation*}
where $\mathbold{i} = [1,0,0]^{T}$. Given the current nominal state $\mathbold{x}_t$, if the initial tracking error arising from the planned state $\mathbold{x}_{t+1}^*$  satisfies $\|\mathbold{e}_{0}\| \leq r_0$, we can then guarantee up to $(1 - \epsilon)$ confidence that vehicle position $\mathbold{p}_{t+1}$, at time $t+\Delta t$, is an element of the following set
\begin{equation} \label{eq:set_ISS}
    \mathcal{S}_{ISS} =  \{\mathbold{p}\in \mathbb{R}^2 \,|\, \|\mathbold{p}_{t+1}(\mathbold{x}_t,\kappa(\mathbold{e}_0)) - \mathbold{p}^*_{t+1}\| \leq r_{\Delta T}\}.
\end{equation}
That is, if the constraint  $\|\mathbold{e}_{0}(\mathbold{x}_t,\mathbold{x}_{t+1}^*)\| \leq r_0$ is satisfied, then enforcing an added distance gap of $r_{\Delta t}$ between the ego vehicle and the obstacle is sufficient to guarantee obstacle avoidance with a $(1-\epsilon)$ confidence.

\begin{theorem} \label{thm:main_thm2}
Consider a differential-driven UGV with nominal dynamics \eqref{eq:origin}, true dynamics \eqref{eq:true}, and reference trajectory $\mathbold{p}^{d}(t)$.
Denote the current vehicle operating state as $X_t$ and its projected nominal state as $\mathbold{x}_t$. The polar tracking error satisfies nominal dynamics \eqref{eq:polar_eom}. Consider the tracking control policy \eqref{eq:policy_over_arching} where the nominal ancillary controller $\kappa$ \eqref{eq:fb_law} is used for stabilizing the tracking error $\mathbold{e}$. 

    Suppose \eqref{eq:polar_eom} is perturbed by an additive matched disturbance, $g_{p}(\mathbold{e})\mathbold{d}_{\mathbold{u}}$, and an additive unmatched model disturbance $d_{\perp}\hat{\mathbold{n}}_{g_p}^{\perp}$ to \eqref{eq:preturbed_tracking_eqn}, and there exist $Z_{\epsilon}\in \mathbb{R}_{>0}$ satisfying \eqref{eq:desired_bounds_u} and $Z_{\epsilon}^{\perp}\in \mathbb{R}_{>0}$ satisfying \eqref{eq:desired_bounds_uperp} given a risk tolerance $\epsilon \in [0,1]$. Suppose there exists $\tilde{\lambda}_1\in\mathbb{R}_{>0}$ in Theorem \ref{thm:main_thm}. Let $\{\mathcal{FS}^*_{t+k}\}_{k=0}^{n_h-1}$ be the set of admissible inputs \footnote{ If a input sequences $\{\mathbold{u}_{t+k}\}_{k=0}^{n_h-1}$ satisfies the constraints \eqref{eq:discrete_time_nominal} - \eqref{eq:obs_input_aug} given current state  $\mathbold{x}_t$ then such input sequence is an element of the set $\{\mathcal{FS}^{*}_{t+k}\}_{k=0}^{n_h-1}$, $\mathbold{u}_{t+k}\in \mathcal{FS}^{*}_{t+k}$ for all $k\in \mathbb{Z}_{k=0}^{n_h-1}$.} for the following optimization problem.
\begin{subequations} \label{eq:FTOCP_mine}
\begin{align}
      \{ \mathbold{u}^{*}_{t+k}\}&_{k=0}^{n_h - 1}  =\!\!\!\!\min_{\tiny \begin{array}{c} 
           \{\mathbold{u}_{t+k}\}^{n_h - 1}_{k=0} 
        \end{array}}\normalsize \!\!\! \sum_{k=0}^{n_h - 1} \mathcal{L}_{track} \\
      \mbox{s.t.}\quad &      \mathbold{x}_{t+k+1} = f^d_{t+k}(\mathbold{x}_{t+k},\mathbold{u}_{t+k}, \Delta t) \label{eq:discrete_time_nominal}\\
    &  \mathbold{e}_{k} = f_{\mathbold{e}}(\mathbold{x}_{t+k}, \mathbold{x}_{t+k+1}) \\
   & \|\mathbold{e}_{k}\| \leq r_{0} \label{eq:obs_avoid_init}\\
 & {C}(\mathbold{p}_{t+k+1},r_{(k+1)\Delta T}) \cap \overline{\mathcal{O}} = \emptyset \label{eq:obs_avoid_desired}\\
     & \mathbold{u}_{t+k} + \kappa_{ISS}(\mathbold{e}_{k})\in \mathcal{D}^{\mathbold{u}},\quad \forall k\in \mathbb{Z}_{0}^{n_h-1} \label{eq:obs_input_aug}\\
     &\mathbold{x}_t = {\mathbold{x}}_{t}, 
\end{align}
\end{subequations}
where equality constraint \eqref{eq:discrete_time_nominal} is the discrete-time version of nominal model \eqref{eq:origin}. The set ${C}(\mathbold{p}_{k+1},r_{\Delta T})$ in \eqref{eq:obs_avoid_desired} denotes the closed disc with center $\mathbold{p}_{k+1} \in \mathbb{R}^2$ and radius $r_{\Delta T(k+1)}\in \mathbb{R}_{>0}$.
   Let $\{\mathcal{FS}_k\}_{k=0}^{n_h-1}$ be the sequence of feasible input sets to the optimization problem \eqref{eq:FTOCP}. If $r_{t}$ is computed using \eqref{eq:main_result}, then $\mathcal{FS}^*_{t+k} \subseteq \mathcal{FS}_{t+k}$, for all $k\in \mathbb{Z}_{0}^{n_h-1}$.
\end{theorem}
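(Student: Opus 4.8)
The plan is to prove that the deterministic, constraint-tightened program \eqref{eq:FTOCP_mine} is a conservative inner surrogate for the original chance-constrained program \eqref{eq:FTOCP}: every input sequence feasible for \eqref{eq:FTOCP_mine} also satisfies all constraints of \eqref{eq:FTOCP}, which is exactly the inclusion $\mathcal{FS}^*_{t+k} \subseteq \mathcal{FS}_{t+k}$. I would fix an arbitrary admissible sequence $\{\mathbold{u}_{t+k}\}_{k=0}^{n_h-1} \in \{\mathcal{FS}^*_{t+k}\}_{k=0}^{n_h-1}$ satisfying \eqref{eq:discrete_time_nominal}--\eqref{eq:obs_input_aug}, and verify that it meets \eqref{eq:optimization_obsavoid_setconstraint}. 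This probabilistic obstacle constraint is the only nontrivial one, since the input-admissibility and initial-condition constraints transfer verbatim.

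First I would invoke Theorem \ref{thm:main_thm} at each prediction step. Constraint \eqref{eq:obs_avoid_init} enforces $\|\mathbold{e}_k\| \leq r_0$, which matches the hypothesis $\|\mathbold{e}(0)\| \leq r(0)$ of Theorem \ref{thm:main_thm} with $r_0$ as in \eqref{eq:r0_definition}. Because the existence of $\tilde{\lambda}_1 > 0$ is assumed, Theorem \ref{thm:main_thm} guarantees that under the augmented controller $\kappa_{ISS}$ the perturbed closed-loop error obeys $\|\hat{\mathbold{e}}(t)\| \leq r(t)$ on $t \in [0,\Delta T]$, with $r(t)$ given by \eqref{eq:main_result}. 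Using the Remark following Theorem \ref{thm:main_thm}, I would read the idealized $\epsilon \to 0$ statement at the $(1-\epsilon)$ confidence level for the finite risk $\epsilon$, so the tube bound holds with probability at least $1-\epsilon$.

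Next I would translate the error-tube bound into a position bound and connect it to the map geometry. Since $\hat{\rho}(t) = |\mathbold{i}^{T}\hat{\mathbold{e}}(t)| \leq \|\hat{\mathbold{e}}(t)\|$, the realized position $\mathbold{p}_{t+k+1} = M\mathbold{x}_{t+k+1}$ lies in the closed disc $C(\mathbold{p}^*_{t+k+1}, r_{(k+1)\Delta T})$ with confidence $1-\epsilon$, i.e. inside the set $\mathcal{S}_{ISS}$ of \eqref{eq:set_ISS}. Constraint \eqref{eq:obs_avoid_desired} renders this disc disjoint from the augmented obstacle set $\overline{\mathcal{O}}$. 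Hence the collision event $\{M\mathbold{x}_{t+k+1} \in \overline{\mathcal{O}}\}$ is contained in the complement of the high-confidence tube event, giving $\mathbb{P}_{\Omega}[M\mathbold{x}_{t+k+1} \in \overline{\mathcal{O}}] \leq \epsilon$, which is precisely \eqref{eq:optimization_obsavoid_setconstraint}. Finally, constraint \eqref{eq:obs_input_aug} guarantees the realized augmented input $\mathbold{u}_{t+k} + \kappa_{ISS}(\mathbold{e}_k) \in \mathcal{D}^{\mathbold{u}}$, matching the input constraint of \eqref{eq:FTOCP}, so the chosen sequence is feasible for \eqref{eq:FTOCP} and the inclusion follows.

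The main obstacle is the probabilistic bookkeeping that lets the disc-containment event inherit the $(1-\epsilon)$ confidence, together with the per-step recursion of the tube. One must argue carefully that the initial-error hypothesis $\|\mathbold{e}_k\| \leq r_0$ of Theorem \ref{thm:main_thm} is re-established at each step $k$; this follows because $\Omega_{ISS}$ is a robust control invariant set, so the realized error remains inside the $r_0$-ball across consecutive intervals under replanning, and because the geometric tightening by $r_{(k+1)\Delta T}$ in \eqref{eq:obs_avoid_desired} is designed to dominate the tube growth derived in \eqref{eq:main_result}. The delicate point is the explicit $\epsilon \to 0$ versus finite-$\epsilon$ reconciliation through the Remark, which is what bridges the deterministic tube guarantee to the probabilistic obstacle-avoidance constraint.
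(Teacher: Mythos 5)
Your overall route matches the paper's: you transfer feasibility from the tightened deterministic program \eqref{eq:FTOCP_mine} to the chance-constrained program \eqref{eq:FTOCP} by invoking Theorem~\ref{thm:main_thm} (with \eqref{eq:obs_avoid_init} supplying its initial-error hypothesis), projecting the error tube onto position via $|\hat{\rho}(t)| \leq \|\hat{\mathbold{e}}(t)\| \leq r_{\Delta T}$, and concluding from the disjointness constraint \eqref{eq:obs_avoid_desired} that the collision event has probability at most $\epsilon$. That is precisely the paper's argument for $k=0$, including the finite-$\epsilon$ reading of the tube guarantee.

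The gap is in your handling of $k \geq 1$. You assert that the hypothesis $\|\mathbold{e}_k\| \leq r_0$ of Theorem~\ref{thm:main_thm} is ``re-established at each step'' because $\Omega_{ISS}$ is robust control invariant and the realized error stays in the $r_0$-ball ``under replanning.'' Neither half of that justification works here. First, $\Omega_{ISS}$ is invariant only for the matched-disturbance dynamics \eqref{eq:perturbed_dyn_lemma1} (Lemma~\ref{lem:1}); under the unmatched drift $d_{\perp}\hat{\mathbold{n}}_{g_p}^{\perp}$ the realized error escapes the $r_0$-ball --- that is exactly why $r(\tau)$ in \eqref{eq:main_result} grows with $\tau$ and why \eqref{eq:obs_avoid_desired} is tightened with the increasing radii $r_{(k+1)\Delta T}$ rather than a constant $r_{\Delta T}$. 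Note also that constraint \eqref{eq:obs_avoid_init} bounds the error between consecutive \emph{planned} states; at step $k\geq 1$ the \emph{realized} state no longer coincides with $\mathbold{x}_{t+k}$, so Theorem~\ref{thm:main_thm} cannot simply be restarted with $\mathbold{e}_k$ as its initial condition. Second, replanning is not available inside the theorem: the claim concerns the open-loop feasible sets $\mathcal{FS}^*_{t+k}$ over the whole horizon, and the paper invokes replanning (with \eqref{eq:obs_avoid_init} enforced only at $k=0$) only after the theorem, as an implementation remark. The paper's proof instead concatenates the open-loop inputs into a composite signal on $[t,\, t+(k+1)\Delta T]$, shows the realized position stays within $r_{(k+1)\Delta T}$ of the planned position with confidence $1-\epsilon$, and inducts. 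Your second, parenthetical justification --- that the geometric tightening $r_{(k+1)\Delta T}$ dominates the tube growth --- is the correct mechanism; to repair the proof, drop the RCI/replanning argument and promote that composite-flow induction to the main argument.
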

\begin{proof}
    As an overview, we show set inclusion using a standard approach by showing every element of the set $\mathcal{FS}^{*}_{k}$ also belongs to the set $\mathcal{FS}_k$ for all $k \in \mathbb{Z}_{0}^{n_h-1}$. Consider $k = 0$. From Theorem \ref{thm:main_thm}, under the augmented ancillary controller $\kappa_{ISS}$, we can assert that the position flow of the perturbed system $\mathbold{\psi}_{\mathbold{p}}(\tau)$ for $\tau \in [t,t+\Delta T]$ under any input $\mathbold{u}_t \in \mathcal{FS}^*_t+k$  with initial position $\mathbold{p}_t$ and $(1-\epsilon)$ confidence, satisfies the following, 
    \begin{align*}
        \mathbold{\psi}_{\mathbold{p}}(\tau)\! -\! \mathbold{p}^*(\tau) \leq \!\!\!\!  \sup_{\tau \in [t,t+\Delta T]} \!\!\|\delta \mathbold{p}(\tau)\| \leq \!\!\!\!\sup_{\tau \in [t,t+\Delta T]}\!\!|\rho(\tau)| \leq r_{\Delta T}.
    \end{align*}
    The above inequality implies, for all $\mathbold{u}_t\in \mathcal{FS}^*_0$, satisfaction of the inequality constraints \eqref{eq:obs_avoid_init} and \eqref{eq:optimization_stateninput_constraint} is equivalent to the statement that the perturbed system's position flow $\mathbold{\psi}_{\mathbold{p}}(\tau)$ for $\tau \in [t, t + \Delta T]$ does not intersect an occupied grid with $(1-\epsilon)$ confidence. Therefore, control inputs $\mathbold{u}_t\in \mathcal{FS}_{t}^*$ are also feasible solutions to optimization problem \eqref{eq:FTOCP} yielding $\mathbold{u}_t \in \mathcal{FS}_t$, meaning $\mathcal{FS}^*_0 \subseteq \mathcal{FS}_0$. For $k = 1$, we can similarly inspect the flow generated by the composite input
    \begin{align*}
        \mathbold{u}(\tau) = \begin{cases}
            \mathbold{u}_t \in \mathcal{FS}^*_{t}, \quad &\tau \in [t, t+\Delta T] \\
            \mathbold{u}_{t+1}\in \mathcal{FS}^*_{t+1}, \quad &\tau \in [t + \Delta T, t+2\Delta T].
        \end{cases}
    \end{align*}
Building on the result for $k=0$, we have $\mathbold{\psi}_{\mathbold{p}}(\tau)\leq \mathbold{p}^*(\tau)  + r_{2\Delta T}$ for $\tau \in [t, t+2\Delta T]$ with confidence $1-\epsilon$. Therefore, $\forall \mathbold{u}_{t+1} \in \mathcal{FS}^*_{t+1}$ are also member of set $\mathcal{FS}_{t+1}$, yielding $\mathcal{FS}^*_{t+1} \subseteq \mathcal{FS}_{t+1}$. The proof for the remaining $k\in \mathbb{Z}_{k=2}^{n_h-1}$ follows by induction. 
\end{proof}

Based on Theorem \ref{thm:main_thm2}, we now can solve the deterministic optimization problem \eqref{eq:FTOCP_mine} with an inaccurate but known nominal model so that it guarantees the desired obstacle avoidance behavior and satisfies the original optimization problem \eqref{eq:FTOCP}. Lastly, we assume that the optimization-based planner receives measurement updates at every $\Delta T$ seconds. Following standard receding horizon MPC implementation, the planner replans at the same rates as measurement updates with constraint \eqref{eq:obs_avoid_init} only applied at $k=0$ \footnote{Enforcing trajectory level safety, by adding safety distance of $r_{(k+1)\Delta T}$ at future horizon steps, $k \in \mathbb{Z}_{1}^{n_h-1}$, can lead to overly conservative behavior.}. 

\begin{rem}
    In Theorem \ref{thm:main_thm2}, a circular robot shape \eqref{eq:obs_avoid_desired} can be replaced by a tighter polytopic boundary. In this case, the cost map introduced in the following section may become orientation-dependent. 
\end{rem}


\begin{algorithm}[!h]
\caption{Discrepancy-Aware MPPI}
\small
\label{alg:2} 
\KwData{Map parameters $N_{\epsilon}$, $r_{map}$, $x_{\mathcal{O},c}$, $y_{\mathcal{O},c}$, occupancy map $\mathcal{O}$, current nominal state $\mathbold{x}_{t}$, goal position $\mathbold{p}^{d}_{t}$, cost function parameters: $Q$, $Q_T$, $R$, $\alpha_{cost}$, $\alpha_{shift}$, MPPI parameters $\Sigma_{\mathbold{u}}$, $N_{sample}$,$\lambda$,$\alpha_{ISS}$, initial control sequence $\{\overline{\mathbold{u}}\}_{0}^{n_h-1}$, and horizon $n_h$.}

\KwResult{$\{\mathbold{x}^{*}_{i}\}_{i=1}^{n_h}$, $\{\mathbold{u}^*_{i}\}_{i=0}^{n_h-1}$, $\mathbold{u}_{send}$}

Create $l_{map} + 2N_{\epsilon}$ by $w_{map} + 2N_{\epsilon}$ grid map $\hat{\mathcal{C}}_{buffer}$.

Create $l_{map} + 2N_{\epsilon}$ by $w_{map} + 2N_{\epsilon}$ enlarged occupancy map $\hat{\mathcal{O}}$ based on $\mathcal{O}$.

\While{ task not completed}{
\For{ $i = - N_{\epsilon}, i < N_{\epsilon}, i ++$}{
\For{$j = - N_{\epsilon}, j < N_{\epsilon}, j ++$}{
    $C_{buffer}(N_{\epsilon} + i:N_{\epsilon} + l_{map} + i - 1, N_{\epsilon} + j:N_{\epsilon} + w_{map} + j - 1) += \frac{\alpha_{shift}\hat{\mathcal{O}}_{i,j}^{s}}{100(\sqrt{i^2 + j^2 + 1})}$.
}
}
$\mathcal{C}_{\epsilon} = \lceil\mathcal{C}_{buffer}(N_{\epsilon}:N_{\epsilon}+l_{map}-1, N_{\epsilon}:N_{\epsilon}+w_{map}-1),\overline{\mathcal{L}}_{track}\rceil$.

\For{ $k = 0, k < N_{sample} - 1, k++$}{
Draw $\delta \mathbold{u}$ from $\mathcal{N}(\mathbold{0}, \Sigma_{\mathbold{u}})$ $n_h$ times.
\For{$i = 0, i<n_h-1, i++$}{
$\mathbold{x}_{i+1} = \mathbold{x}_{i} +g(\mathbold{x}_i)(\min(\max(\overline{\mathbold{u}}_i + \delta \mathbold{u}_{i},\mathbold{u}_{min}),\mathbold{u}_{max})\Delta t$
}
Evaluate the $k^{th}$ MPPI trajectory cost $\overline{\mathcal{L}}_k$ using \eqref{eq:augmented_cost}.
}


Compute the optimal MPPI input sequence $\{\mathbold{u}_{i}^*\}_{i=0}^{n_h -1 }$ using equations \eqref{eq:is_1}-\eqref{eq:is_2} with $\overline{\mathcal{L}}_k$ as the flow costs. 

Compute best sampled state trajectory $\{\mathbold{x}_i^{*}\}_{i=1}^{n_h}$ and use $\mathbold{x}_{1}^{*}$ and $\mathbold{x}_0$ to compute polar tracking error $\hat{\mathbold{e}}$. 

Use \eqref{eq:ucmd} to compute the input that will be sent to actuators. 

For $i\!=\!0:n_h-2$: $\overline{\mathbold{u}}_{i} = \mathbold{u}^*_{i+1}$. Fill $\overline{\mathbold{u}}^*_{n_h-1}$ using flatness relationship \eqref{eq:flat_input_v}.
}
\normalsize
\end{algorithm}

\subsection{Discrepancy-Aware MPPI}
The nonlinear program \eqref{eq:FTOCP_mine} can still be numerically challenging to solve, especially when set $\mathcal{O}$ is represented in a grid map. Therefore, we propose a \textit{discrepancy-aware cost map} to encode the obstacle avoidance constraint \eqref{eq:obs_avoid_desired}. This map facilitates cost minimization, ensuring robust obstacle avoidance amid model inaccuracy and vehicle geometry.
Consider the following augmented cost function which combines the reference tracking cost \eqref{eq:cost_standard} with a collision penalty,
\begin{equation} \label{eq:augmented_cost}
    \overline{\mathcal{L}} = \mathcal{L}_{track} + \sum_{k=1}^{n_h} \mathcal{L}_{C}(\mathbold{p}_k, \mathcal{C}_{\epsilon}) + \alpha_{ISS}\mathbold{1}_{\|\mathbold{p}_{t+1} - \mathbold{p}_{t}\|\geq r_{0}},
\end{equation}
where cost function $\mathcal{L}_{C}:\mathbb{R}^2 \times \mathbb{R}^{l_{map} \times w_{map}} \to \mathbb{R}$ takes in a position $\mathbold{p} \in \mathbb{R}^2$ and a $l_{map} \times w_{map}$ sized discrepancy aware cost map to produce a collision cost. The parameter $\alpha_{ISS} \in \mathbb{R}_{>0}$ is chosen large enough to force the planner to produce a trajectory that satisfies the requirement $\|\mathbold{e}_0\| \leq r_0$ \footnote{The term $\alpha_{ISS}\mathbold{1}_{\|\mathbold{p}_{t+1} - \mathbold{p}_t\|\geq r_{0}}$ can be dropped if sample trimming is incorporated where a MPPI sampled trajectory is omitted when the inequality $\|\mathbold{p}_{t+1} - \mathbold{p}_{t}\|\geq r_{0}$ holds. The authors used a replanning scheme instead of sample trimming in the experimental validation section where if the inequality $\|\mathbold{p}_1 - \mathbold{p}_0\|\geq r_{0}$ holds, a new set of input discrepancies are sampled until the converse is true. }.

Using the discrete-time nominal model \eqref{eq:discrete_time_nominal}, we apply MPPI to derive sub-optimal trajectory and input sequences ($\{\mathbold{x}^*_{t+k}, \mathbold{u}^*_{t+k}\}_{k=1}^{n_h}$) of optimization problem \eqref{eq:FTOCP_mine} using the importance sampling law \eqref{eq:is_1}-\eqref{eq:is_2} and the proposed discrepancy-aware cost function \eqref{eq:augmented_cost}. To manage input constraints, we introduce an element-wise clamping function $\mathbold{u}_{clamp} = \max(\min(\mathbold{u}_{max},\mathbold{u}),\mathbold{u}_{min})$ \citep{williams2018robust}, which does not affect the MPPI algorithm's convergence. The initial input sequence (warm start) is obtained using the nominal dynamics and flatness properties as \eqref{eq:flat_input_v}. Algorithm \ref{alg:2} describes this process, including the construction of the cost map $\mathcal{C}_{\epsilon}$.

\subsection{Discrepancy-Aware Cost Map}

Given that the nominal model \eqref{eq:discrete_time_nominal} focuses on the UGVs' center of mass, we account for the vehicle footprint via the smallest circumscribing circle with radius, $r_{ego}$, see Fig. \ref{fig:cost_map_build}. We then enlarge each occupied grid by $r_{ego}$. Additionally, to accommodate model mismatches, we further enlarge the grids by $r_{\Delta T}$, creating a ``collision buffer" around each occupied grid. To encode this collision buffer in the nominal grid map, we define $N_{\epsilon} \triangleq \lceil (r_{\Delta T} + r_{ego})/r_{map} \rceil$, a positive integer that denotes the buffer's grid size. The discrepancy-aware cost map can be interpreted as a $N_{\epsilon}$ grid ``inflation" of the occupancy map to add a safety buffer between the ego vehicle and surrounding obstacles (see Fig. \ref{fig:cost_map_build} for an example of such inflation to account for the unmodeled disturbances). The cost map construction is described in Algorithm \ref{alg:2}, with added details in \ref{Appendix:1}. 

The discrepancy-aware cost map, denoted as $\mathcal{C}_{\epsilon}$, satisfies the following property:
\begin{equation}
    \mathcal{C}{\epsilon}(\mathbold{p}) \geq \overline{\mathcal{L}}_{track} \quad  \mbox{iff}\quad  {C}(\mathbold{p},r_{\Delta T}+r_{ego}) \cap \overline{\mathcal{O}} \neq \emptyset 
\end{equation}
for all positions $\mathbold{p}\in \mathbb{R}^2$ that place in the occupancy map. 

\label{sec:experiment}
\begin{figure}
\centering
    \includegraphics[width=0.99\linewidth]{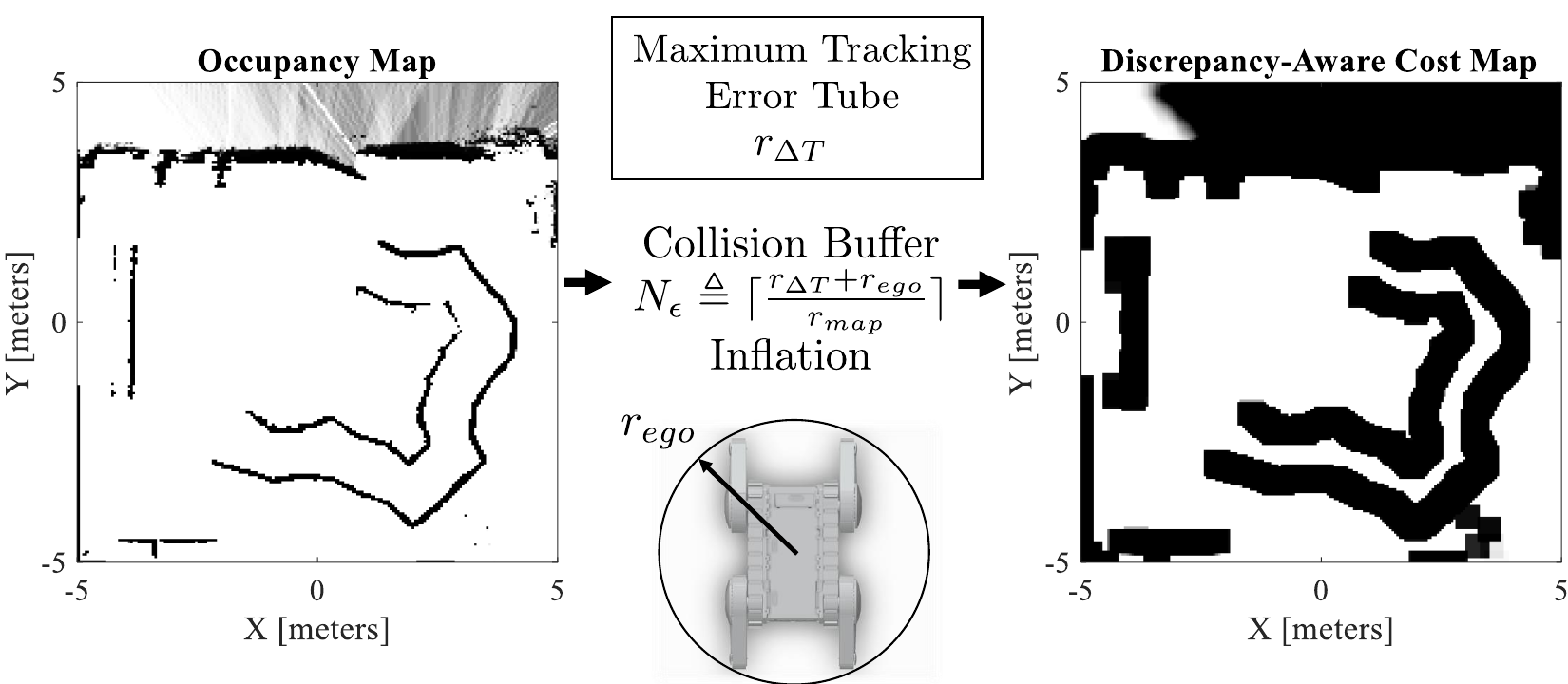}
    \caption{A pictorial overview of the construction of the Discrepancy-Aware Cost Map from occupancy map and collision buffer.}
    \label{fig:cost_map_build}
\end{figure}

For MPPI cost evaluation, we map the position $\mathbold{p} = [x, y]^T$ to the corresponding grid indices on $\mathcal{C}{\epsilon}$, calculating $\mathcal{L}_{C}(\mathbold{p}, \mathcal{C}{\epsilon})$ as the cost map entry at those grid indices \footnote{
Explicitly, we compute $X_{\mathcal{C}} = \lfloor \frac{x}{r_{map}} + \frac{l_{map}}{2}\rfloor$ and  $Y_{\mathcal{C}}  = \lfloor \frac{y}{r_{map}} + \frac{w_{map}}{2}\rfloor$ given $\mathbold{p}_k$. Then, $\mathcal{L}_{C}(\mathbold{p}_{k},\mathcal{C}_{\epsilon})$ is the $(X_\mathcal{C},Y_\mathcal{C})^{th}$ entries of $\mathcal{C}_{\epsilon}$.}.

As an extension to Theorems \ref{thm:main_thm} and \ref{thm:main_thm2}, we have the following guarantees of the proposed data-driven planning and control framework. 
\begin{corollary}
    Suppose sufficient training samples, ${S}$, of the form \eqref{eq:tuple_training}, are available
 to calculate $Z_{\epsilon}, Z_{\epsilon}^{\perp}$ using Algorithm \ref{alg:1} and a user-defined risk-level $\epsilon \in (0,1)$. Suppose there exists $\tilde{\lambda}_1 >0$ such that Theorem \ref{thm:main_thm} holds. Under the same conditions and assumptions as Theorem \ref{thm:main_thm2}, a local minimum trajectory $\{\mathbold{x}^*_{t+k}\}_{k=1}^{n_h}$ given by input sequence $\{\mathbold{u}^*_{t+k}\}_{k=0}^{n_h-1}$ is obtained using the discrepancy-aware MPPI algorithm, given by Algorithm \ref{alg:2}. Applying control input 
 \begin{eqnarray} \label{eq:final_input}
     \mathbold{u}_t = \mathbold{u}_t^* + \kappa_{ISS}(\mathbold{e}_0)
 \end{eqnarray}
 is sufficient to avoid the occupied grids in map $\mathcal{O}$ with a minimum $(1-\epsilon)$ confidence, if the following conditions are satisfied:
 \begin{itemize}
     \item The control inputs $\mathbold{u}_t$ from Eqn. \eqref{eq:final_input} satisfy input constraint, i.e.  $\mathbold{u}_t \in \mathcal{D}_{\mathbold{u}}$.
     \item The trajectory cost $\overline{\mathcal{L}}(\{\mathbold{x}^*_k,\mathbold{u}^*_k\}_{k=0}^{n_h-1}) < \overline{\mathcal{L}}_{track}$. 
\end{itemize}\label{coro:1}
\end{corollary}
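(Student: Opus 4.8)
The plan is to chain together the three established results---the conformal coverage guarantee of Algorithm~\ref{alg:1}, the tracking-tube bound of Theorem~\ref{thm:main_thm}, and the feasible-set inclusion of Theorem~\ref{thm:main_thm2}---and to bridge them to the output of the MPPI planner through the defining property of the discrepancy-aware cost map. First I would invoke conformal prediction: under the stated sufficiency of training data, Algorithm~\ref{alg:1} returns bounds $Z_\epsilon$ and $Z_\epsilon^{\perp}$ satisfying \eqref{eq:desired_bounds_u} and \eqref{eq:desired_bounds_uperp} at the user-chosen level $\epsilon$. These are exactly the hypotheses of Theorem~\ref{thm:main_thm}, so with the assumed $\tilde{\lambda}_1>0$ the augmented controller $\kappa_{ISS}$ confines the perturbed tracking error by $\|\hat{\mathbold{e}}(t)\|\le r(t)$ for $t\in[0,\Delta T]$, whence the realized position obeys $\|\mathbold{p}_{t+1}-\mathbold{p}^*_{t+1}\|\le r_{\Delta T}$, all holding with $(1-\epsilon)$ confidence.

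The crux is to translate the cost-threshold hypothesis into hard feasibility of the deterministic program \eqref{eq:FTOCP_mine}. I would exploit the non-negativity of every summand in the augmented cost \eqref{eq:augmented_cost} together with the assumed $\overline{\mathcal{L}}(\{\mathbold{x}^*_k,\mathbold{u}^*_k\})<\overline{\mathcal{L}}_{track}$: since all terms are non-negative, each collision term $\mathcal{L}_C(\mathbold{p}_k,\mathcal{C}_\epsilon)$ and the slack-indicator term must individually fall strictly below $\overline{\mathcal{L}}_{track}$. By the defining property of the cost map, $\mathcal{L}_C(\mathbold{p}_k,\mathcal{C}_\epsilon)<\overline{\mathcal{L}}_{track}$ is equivalent to $C(\mathbold{p}_k,r_{\Delta T}+r_{ego})\cap\overline{\mathcal{O}}=\emptyset$, which is precisely constraint \eqref{eq:obs_avoid_desired}; and because $\alpha_{ISS}$ is chosen large, the indicator term remaining below threshold forces $\mathbold{1}_{\|\mathbold{p}_{t+1}-\mathbold{p}_t\|\ge r_0}=0$, i.e. $\|\mathbold{e}_0\|\le r_0$, which is constraint \eqref{eq:obs_avoid_init}. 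Together with the first admissibility hypothesis $\mathbold{u}_t\in\mathcal{D}^{\mathbold{u}}$ supplying \eqref{eq:obs_input_aug}, the MPPI-generated pair therefore lies in the feasible set $\{\mathcal{FS}^*_{t+k}\}$ of \eqref{eq:FTOCP_mine}.

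With feasibility of \eqref{eq:FTOCP_mine} in hand, I would close the argument by Theorem~\ref{thm:main_thm2}: its inclusion $\mathcal{FS}^*_{t+k}\subseteq\mathcal{FS}_{t+k}$ certifies that any input sequence feasible for the deterministic surrogate is feasible for the original chance-constrained problem \eqref{eq:FTOCP}, so that applying $\mathbold{u}_t=\mathbold{u}_t^*+\kappa_{ISS}(\mathbold{e}_0)$ satisfies \eqref{eq:optimization_obsavoid_setconstraint}, i.e. avoids every occupied grid of $\mathcal{O}$ with at least $(1-\epsilon)$ confidence. I expect the main obstacle to be the careful accounting of how the \emph{soft} MPPI penalties encode the \emph{hard} constraints of \eqref{eq:FTOCP_mine}: specifically, arguing rigorously that a below-threshold aggregate cost forces each individual collision and slack term below threshold (which rests on non-negativity and on $\overline{\mathcal{L}}_{track}$ acting as a per-term ceiling through the clipping $\min(\cdot,\overline{\mathcal{L}}_{track})$), and then confirming that the confidence $1-\epsilon$ is not degraded as it propagates from the conformal bounds through Theorem~\ref{thm:main_thm} to the planner. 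I would emphasize that MPPI local-optimality plays no role in the safety conclusion---only the feasibility certified by the cost-threshold hypothesis does---so the corollary is fundamentally a \emph{feasibility}, not an optimality, statement.
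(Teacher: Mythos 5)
Your opening and closing steps are sound, and your overall skeleton (cost threshold $\Rightarrow$ constraint satisfaction $\Rightarrow$ tube from Theorem~\ref{thm:main_thm} $\Rightarrow$ probabilistic avoidance) matches the paper's. The genuine gap is in what you yourself call the crux: a below-threshold total cost does \emph{not} imply feasibility of the full-horizon deterministic program \eqref{eq:FTOCP_mine}. Your per-term argument from non-negativity is valid, but what each collision term certifies is only $C(\mathbold{p}_{t+k+1},\, r_{\Delta T}+r_{ego}) \cap \overline{\mathcal{O}} = \emptyset$, because the cost map $\mathcal{C}_{\epsilon}$ is built with the \emph{fixed} inflation radius $N_{\epsilon} = \lceil (r_{\Delta T}+r_{ego})/r_{map}\rceil$. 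Constraint \eqref{eq:obs_avoid_desired}, by contrast, demands clearance with the \emph{growing} radius $r_{(k+1)\Delta T}$, and $r_{(k+1)\Delta T} > r_{\Delta T}$ for $k \geq 1$ since $r(\tau)$ in \eqref{eq:main_result} is increasing in $\tau$. So your ``is precisely constraint \eqref{eq:obs_avoid_desired}'' holds only at $k=0$. Likewise, the indicator term in \eqref{eq:augmented_cost} involves only $\mathbold{p}_{t+1}$ and $\mathbold{p}_{t}$, so it enforces \eqref{eq:obs_avoid_init} only at $k=0$ (and, strictly, only the $\rho$-component of $\mathbold{e}_0$, a looseness the paper shares). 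Consequently, full-horizon membership in $\{\mathcal{FS}^*_{t+k}\}_{k=0}^{n_h-1}$ is not established, and your appeal to the full set inclusion of Theorem~\ref{thm:main_thm2} is unsupported.

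The corollary's conclusion survives because it never needed full-horizon feasibility: it is a receding-horizon, one-step statement about the single applied input $\mathbold{u}_t$. This is exactly how the paper argues: the cost threshold yields $\mathcal{L}_{C}(\mathbold{p}^*_{t+1},\mathcal{C}_{\epsilon}) < \overline{\mathcal{L}}_{track}$, i.e., the planned next position keeps the $r_{\Delta T}+r_{ego}$ buffer; the indicator yields $\|\mathbold{e}_0\| \leq r_0$; the input-limit hypothesis makes $\kappa_{ISS}$ admissible; and Theorem~\ref{thm:main_thm} then confines the perturbed flow within the tube of radius $r_{\Delta T}$ along $\{\mathbold{p}^*_{t+k}\}_{k=0}^{1}$ with confidence $1-\epsilon$, giving avoidance over $[t, t+\Delta T]$ until the next replan (the paper's footnote confirms \eqref{eq:obs_avoid_init} is applied only at $k=0$ in the receding-horizon implementation). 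Your proof becomes correct if you restrict the feasibility claim to $k=0$ --- equivalently, invoke only the base case of Theorem~\ref{thm:main_thm2}'s induction --- rather than the full inclusion over the horizon.
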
 
\begin{proof}
   Based on the cost assignment of $\mathcal{C}_{\epsilon}$, if the total cost of the planned trajectory $\overline{\mathcal{L}}(\{\mathbold{x}^*_k,\mathbold{u}^*_k\}_{k=0}^{n_h-1})$ is less than $\overline{\mathcal{L}}_{track}$, we can automatically guarantee that $\mathcal{L}_{C}(\mathbold{p}^*_{t+1},\mathcal{C}_\epsilon) \leq \overline{\mathcal{L}}_{track}$, i.e. collision safety with confidence $1-\epsilon$. 
   Given the control policy \eqref{eq:final_input} also satisfies the control input limit, we can conclude that the flow of the perturbed system \eqref{eq:preturbed_tracking_eqn} is within an expanding maximum track error tube of radius $r_{t}$, calculated using \eqref{eq:main_result}, along the planned position trajectory $\{\mathbold{p}_{t+k}^*\}_{k=0}^{1}$ with $(1-\epsilon)$ confidence.
\end{proof}

The theoretical guarantees of Corollary \ref{coro:1} serve as safety validation and verification of the plan and also a safe controller to be synthesized, as shown in the driver-assist application in Section \ref{sec:experiment}.

\section{Trajectory Tracking with Unknown Obstacles}
\label{sec:experiment}
\begin{figure}
\centering
    \includegraphics[width=0.99\linewidth]{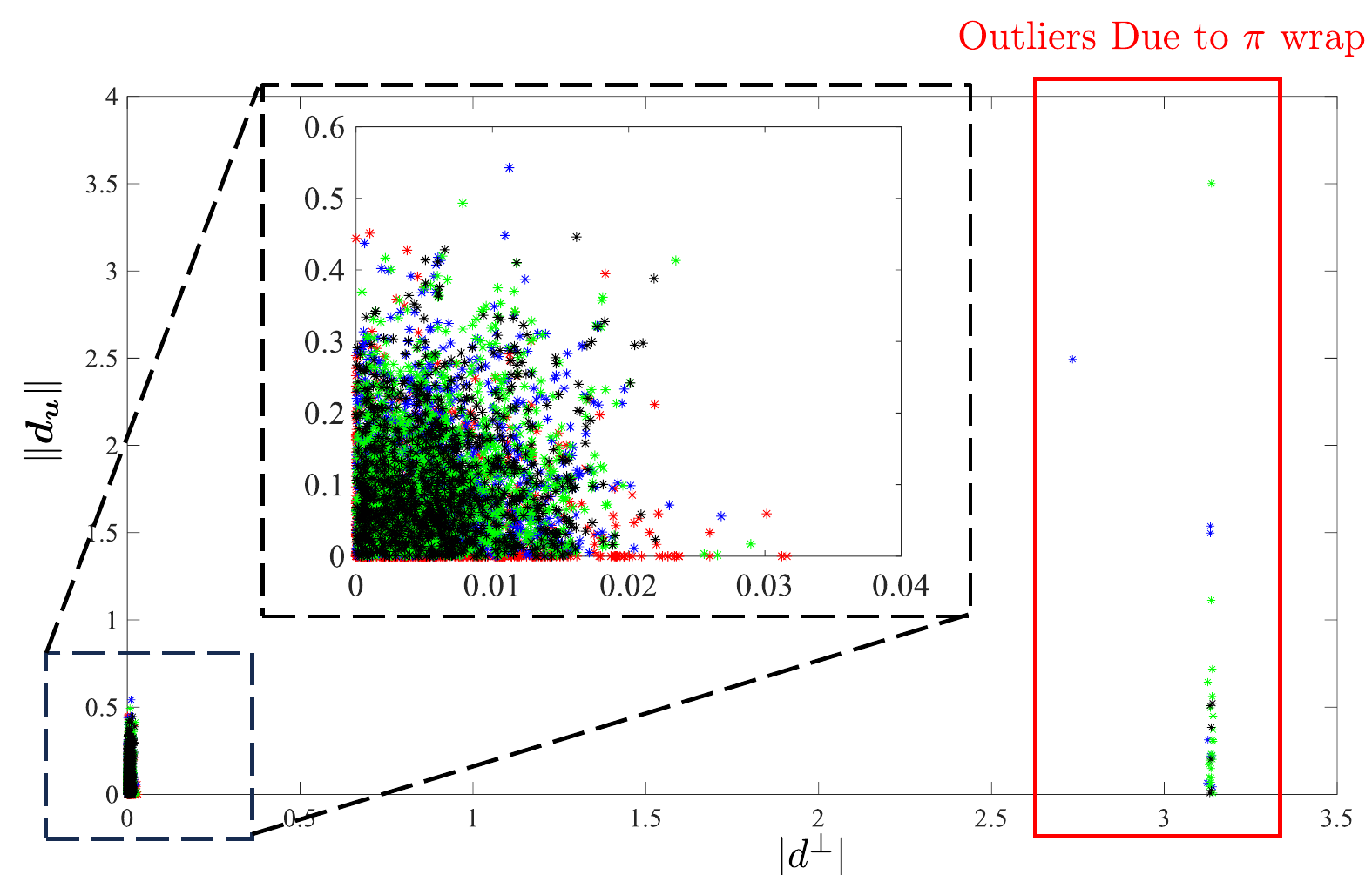}
    \caption{Data-driven discrepancy identification results from configuration B with $T_{lap} = \{20,30,40,50\}$ seconds. From the training set, we plot the absolute unmatched drift against the input discrepancy norm, observing non-negligible model discrepancy upper bounds. Outliers are observed due to incorrect $\pi$ wrapping which can skew the empirical distribution formed by the nonconformity scores as well as the conformal-driven discrepancies. }
    \label{fig:training_results}
\end{figure}


We first validated the proposed framework through high-speed trajectory tracking experiments in the presence of stationary obstacles. These experiments are conducted using four ground and vehicle configurations, as depicted in Fig.~\ref{fig:schematics}.

The UGV is a Flipper Pro by Rover Robotics, with $r_{ego}=0.39~\mbox{m}$ in the flipper down configuration and $r_{ego} = 0.3~\mbox{m}$ in the flipper up configuration. There are three stationary boxes in the test field acting as obstacles, strategically placed to test the robot's obstacle avoidance capabilities (refer to Fig.~\ref{fig:maps} for layout details).

We construct a 2D occupancy map centered at $(0,0)$ with a $0.05~\mbox{m}$ grid size, updated at $2~\mbox{Hz}$, using a LiDAR sensor (VLP-16 by Velodyne) mounted on the robot. We use a standard occupancy grid mapping algorithm~\citep{thrun2002probabilistic} to construct the occupancy map with robot poses provided by an OptiTrack motion capture system. An input limit of $v_{max} = [-2,2]~\mbox{m}/\mbox{s}$ and $\omega_{max} = [-2,2]~\mbox{rad}/\mbox{s}$ is enforced. The dead zone is selected to be $\rho_{min} = 0.05\,\mbox{m}$, and the maximum position error is set to $\rho_{max} = 0.5\,\mbox{m}$, considering the input limits. All processes are executed by an onboard AMD Ryzen$^{TM}$ 9 6900HX CPU computer within a ROS 1 environment. 


\subsubsection{Training Details} \label{sec:training_details}
Given that both terrains are flat, model discrepancies are primarily a function of vehicle linear and angular velocities and input time delays. To obtain training data that captures these discrepancies, we follow a ``Figure-8" trajectory, $x^d(t) = 2.5\cos(2\pi t/T_{lap})$ and $y^d(t) = 1.25\sin(4\pi t/T_{lap})$, at four different desired lap timings without obstacles, denoted as $T_{lap}$, for all four configurations. $T_{lap}$ takes values of $20$, $30$, $40$, and $50$ seconds. We gathered approximately $5$ minutes of data at each configuration, with a sampling rate of 20 Hz. A scatter plot of the identified unmatched and matched disturbances from the training data is displayed in Fig. \ref{fig:training_results} which showed non-negligible model discrepancies, mainly arising from input delays and track slipping. Subsequently, we compute the conformal-driven upper bounds and establish the maximum tracking deviation radii, presented in Table \ref{table:table1} using sub-sample count of $L = 3000$.

\subsubsection{Choice of Parameters}
The control algorithm operated at a frequency of $20$ Hz. The MPPI planning horizon is $1.5$ seconds, with an input noise covariance matrix $\sigma_{\mathbold{u}} = \mbox{diag}(0.2,0.2)$. The MPPI sample size was chosen to be $N_{sample} = 2000$. The MPPI costs in \eqref{eq:cost_standard} were selected to be $Q =  \mbox{diag}(50,50)$, $R = \mbox{diag}(1,1)$, $Q_f = \mbox{diag}(200,200)$, and $\alpha_o = \alpha_{ISS} = 10000$. The MPPI inverse temperature parameter $\lambda$ is chosen to be $0.1$. For the cost map, \mbox{$\alpha_{shift} = 0.1$}, \mbox{$w_{map} = l_{map} = 200$}, and $r_{map} = 0.05\,\mbox{m}$, with the map being stationary and centered at the position $(0,0)$. 

\begin{figure}[h!]
\centering
    \includegraphics[width=0.99\linewidth]{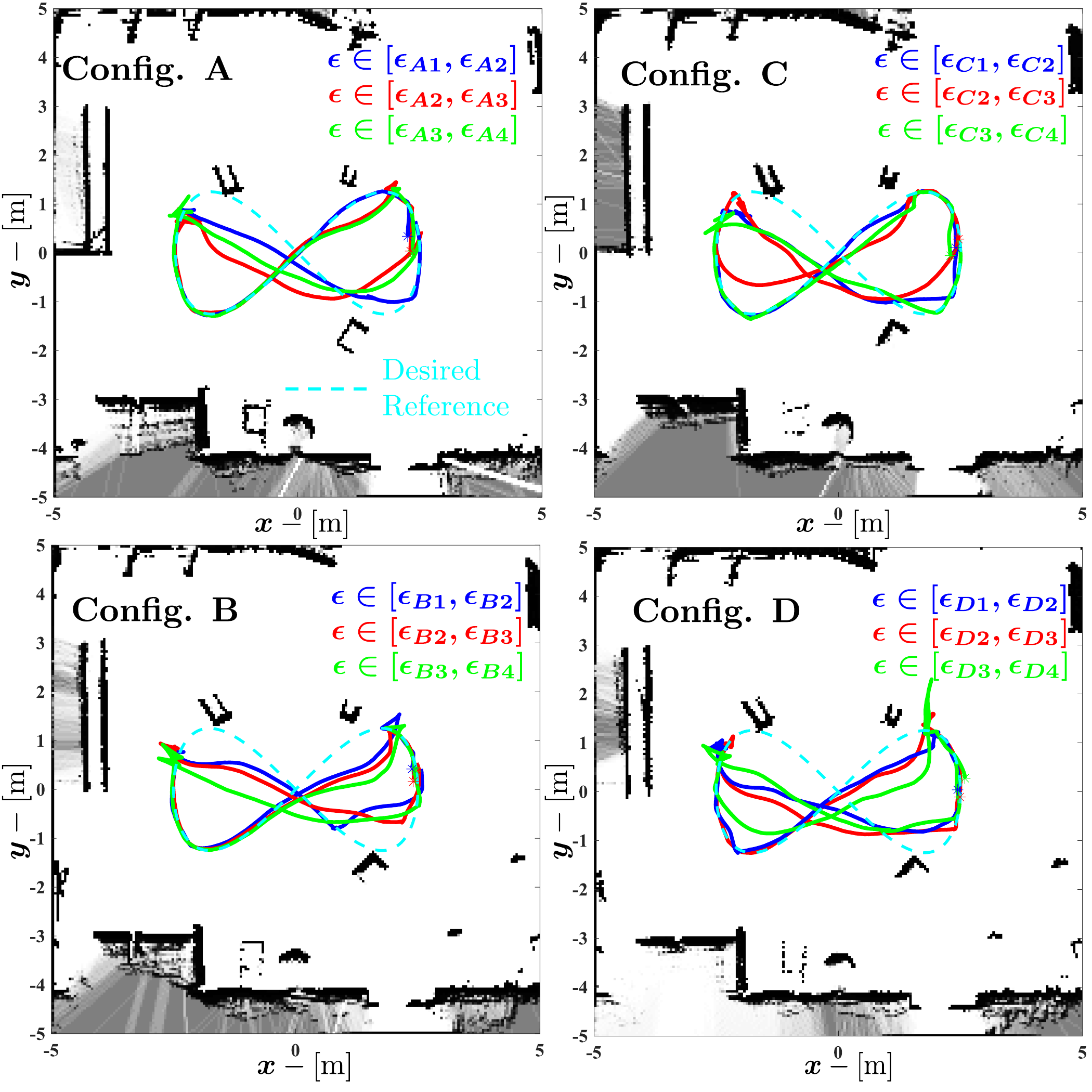}
    \caption{Experimental results of $30$-second laps ``Figure-8" tracking in the presence of stationary obstacles with test configurations A, B, C, and D. The black pixels in the backgrounds for each configuration are the obstacle-occupied grids from LiDAR measurements. The obstacles are placed to obstruct the vehicle if not avoided. The unsafe probability $\epsilon$ ranges for the four configurations are $\epsilon_{A1} = 0.1760$,  $\epsilon_{A2} = 0.0029$, $\epsilon_{A3}=0.0010$, $\epsilon_{A4} = 0.0005$, $\epsilon_{B1} = 0.1409$,  $\epsilon_{B2} = 0.0016$, $\epsilon_{B3} = 0.0013$, $\epsilon_{B4} = 0.0011$,  $\epsilon_{C1} = 0.1321$,  $\epsilon_{C2} = 0.0024$, $\epsilon_{C3} = 0.0011$, $\epsilon_{C4} = 0.0009$, $\epsilon_{D1} = 0.0031$, $\epsilon_{D2} = 0.0016$, $\epsilon_{D3} = 0.0012$, and $\epsilon_{D1} = 0.001$.}
    \label{fig:result}
\end{figure}

\subsubsection{Controller Specifics}
The effectiveness of our controller, as outlined in Theorem \ref{thm:main_thm}, is contingent on the existence of the parameter, $\tilde{\lambda}_1$. However, the time derivative of the Lyapunov function in \eqref{eq:Layp} does not depend on the polar error $\delta$, which implies that $\tilde{\lambda}_1$'s existence cannot be assured. Focusing on the primary objective of tracking the desired positions $\mathbold{p}^d(t)$, we consider the reduced polar coordinate error dynamics as follows:
\begin{equation} \label{eq:reducted_polar}
    \dot{\hat{\mathbold{e}}} = \frac{d}{dt} \begin{bmatrix}
        \rho \\
        \gamma
    \end{bmatrix}= \begin{bmatrix}
        -\cos(\gamma) &     0    \\
       \frac{\sin(\gamma)}{\rho} & -1 
    \end{bmatrix}\delta\mathbold{u} = \hat{g}_{p}(\hat{\mathbold{e}})\delta\mathbold{u}.
\end{equation}
The control input of \eqref{eq:reducted_polar} is $\mathbold{u} = \hat{\kappa}(\hat{\mathbold{e}}) =  [\hat{v}, \hat{\omega}]^{T}$, where $\hat{v} = k_1 \rho \cos{\gamma}$ and $
    \hat{\omega} = k_2 \gamma + k_1 \sin(\gamma)\cos(\gamma)$.  With the Lyapunov function $\hat{V} = \frac{1}{2}(\rho^2 + \gamma^2)$, the controlled system \eqref{eq:reducted_polar} is exponentially stabilizing to $(0,0)$ within domain $\mathcal{D}^{\rho} \setminus \mathcal{D}^{dz} \times \mathcal{D}^{\gamma}$ with $\dot{\hat{V}} \leq -\alpha_3\|\hat{\mathbold{e}}\|$ and $\alpha_3 = \frac{k_2 \rho_{dz}}{2\sqrt{(\pi/2)^2 + \rho_{dz}}}$. Choosing $k_1=0.3$ and $k_2=0.15$, the perturbed closed-loop system has a Lipschitz constant $l_{\hat{V}} = \pi/2$, which is calculated over the input domain. For tracking purposes, the convergence of $\hat{\mathbold{e}}$ to zero implies achieving the desired output tracking, i.e., $\mathbold{p} \to \mathbold{p}^*$.
    
    Applying Theorem \ref{thm:main_thm} to the reduced polar coordinate error dynamics in \eqref{eq:reducted_polar}, and with our parameter choices, we obtain $\alpha_1 = \alpha_2 = 0.5$. The class $K_{\infty}$ function $\alpha_3(V) = 0.0024V$ is affine. Most importantly, there exists $\hat{\lambda}_1 = 1000$ which leads to $\tilde{\alpha}_3(V) = -0.0038 V$. With the augmented ancillary controller $\hat{\kappa}_{ISS} = \hat{\kappa}(\hat{\mathbold{e}}) - \frac{1}{\tilde{\lambda}_1} \hat{g}_p(\hat{\mathbold{e}})^{T}\hat{\mathbold{e}}$, we compute the radii $r_0$ and $r_{\Delta T}$ for the 4 configurations consolidated, see Table \ref{table:table1}. The overall policy that is sent to the vehicle is:
    \begin{equation}\label{eq:ucmd}
        \mathbold{u}_{cmd} = \min(\max(\mathbold{u}_0^* + \hat{\kappa}_{ISS},\mathbold{u_{min}}),\mathbold{u}_{max}).
    \end{equation}

\begin{table}
\begin{center} 
\resizebox{\columnwidth}{!}{
\begin{tabular}{ |c|c|c|c|c| } \hline 
\backslashbox{Config}{$\epsilon$} & Param. & $0.001$ &$0.005$ &$0.01$\\ \hline 
\multirow{2}*{A} & $Z_{\epsilon}\,|\,Z_{\epsilon}^{\perp}$  &$0.710\,|\,0.448$  & $0.423\,|\,0.025$& $0.393\,|\,0.019$ \\
& $r_0\,|\, r_{\Delta T}$& $0.252\,|\,0.255$ & $0.090\,|\,0.090$ & $0.077\,|\,0.077$\\ \hline 

\multirow{2}*{B} & $Z_{\epsilon}\,|\,Z_{\epsilon}^{\perp}$ &$2.153\,|\,0.034$ &$0.419\,|\,0.030$ &$0.381\,|\, 0.026$\\
& $r_0\,|\, r_{\Delta T}$ & $2.318\,|\,2.320$ & $0.088\,|\,0.088$& $0.073\,|\,0.073$\\ \hline 

\multirow{2}*{C} & $Z_{\epsilon}\,|\,Z_{\epsilon}^{\perp}$ & $1.113\,|\,0.032$ & $0.413\,|\,0.027$& $0.369\,|\,0.020$ \\
 & $r_0| r_{\Delta T}$& $0.619\,|\,0.620$& $0.085\,|\,0.085$ & $0.068\,|\,0.068$\\ \hline 

\multirow{2}*{D} & $Z_{\epsilon}\,|\,Z_{\epsilon}^{\perp}$ & $1.878\,|\,0.095$ & $0.429\,|\,0.032$ & $0.401\,|\,0.031$  \\
 & $r_0\,|\, r_{\Delta T}$& $1.763\,|\,1.768$ & $0.092\,|\,0.092$ & $0.080\,|\,0.081$\\ \hline
\end{tabular}
}
\caption{Summary of the offline conformal discrepancy training results and the augmented controller tracking guarantees where autonomous trajectory tracking is performed without obstacles in the four configurations. The training $1-\epsilon$ confidence upper bounds for $\|\mathbold{d}_{\mathbold{u}}\|$ and $|d{d}_{\perp}|$ is provided at three $\epsilon$ levels using Algorithm \ref{alg:1}. Under the augmented control policy ${\kappa}_{ISS}$, we tabulated $r_0$ and $r_{\Delta T}$ with the choices of $\epsilon$ levels. Note, the identified radii $r_0$, $r_{\Delta t}$ are in meters.}
\label{table:table1}
\end{center}
\end{table}

\subsubsection{Results}
The discrepancy-aware MPPI planner is validated by tracking the same ``Figure 8" trajectory used for collecting training data, denoted as $\mathbold{p}^d = (x^d(t), y^d(t))$, with a lap time of $T_{lap} = 30~\mbox{sec}$\footnote{Despite the validation desired trajectory's geometry matches with the training set, the vehicle must deviate from the reference trajectory to avoid the stationary obstacles, covering untrained positions}. The experimental setup can be found in Fig. \ref{fig:maps}. As highlighted in the supplementary video \citep{supp_video}, the nominal MPPI planner combined with the nominal controller without accounting model mismatches, failed to ensure safety, resulting in a collision with the stationary crates. In contrast, our proposed framework completed the trajectory tracking tasks across all four vehicle-ground configurations, effectively handling model mismatches while providing a verifiable safe traversal confidence level. The experimental results are summarized in Fig. \ref{fig:result} for the four vehicle-ground configurations of Fig. \ref{fig:schematics}.

A key feature of our approach is the augmentation of the cost map on a grid basis.  We associate each increment in $N_{\epsilon}$ with a specific range of $\epsilon$ values, demonstrating the tunability of our framework. It was observed that lower values of $\epsilon$ (approaching zero) induce a conservative obstacle-avoiding behavior, akin to traditional robust control methods.  Conversely, as $\epsilon$ nears one, the risk-neutral case, the vehicle tracks closely with the reference path, shows improvements in tracking performance at the cost of safety confidence.

During our analysis, particularly with configurations B and D of Fig. \ref{fig:schematics}, large model discrepancies for $\epsilon = 0.001$ are observed. These discrepancies were predominantly found in datasets characterized by lower lap speeds, suggesting a correlation with specific state or input conditions. This observation signals the possibilities of a state and input-dependent discrepancy discussed in \citep{pmlr_v211_akella23a}, which could potentially provide a more accurate safety buffer compared to our upper bounds $Z_{\epsilon}$ and $Z_{\epsilon}^{\perp}$.

\subsection{Driver Assist for Collision Avoidance} 

\begin{figure*}[h!]
\centering
    \includegraphics[width=0.99\linewidth]{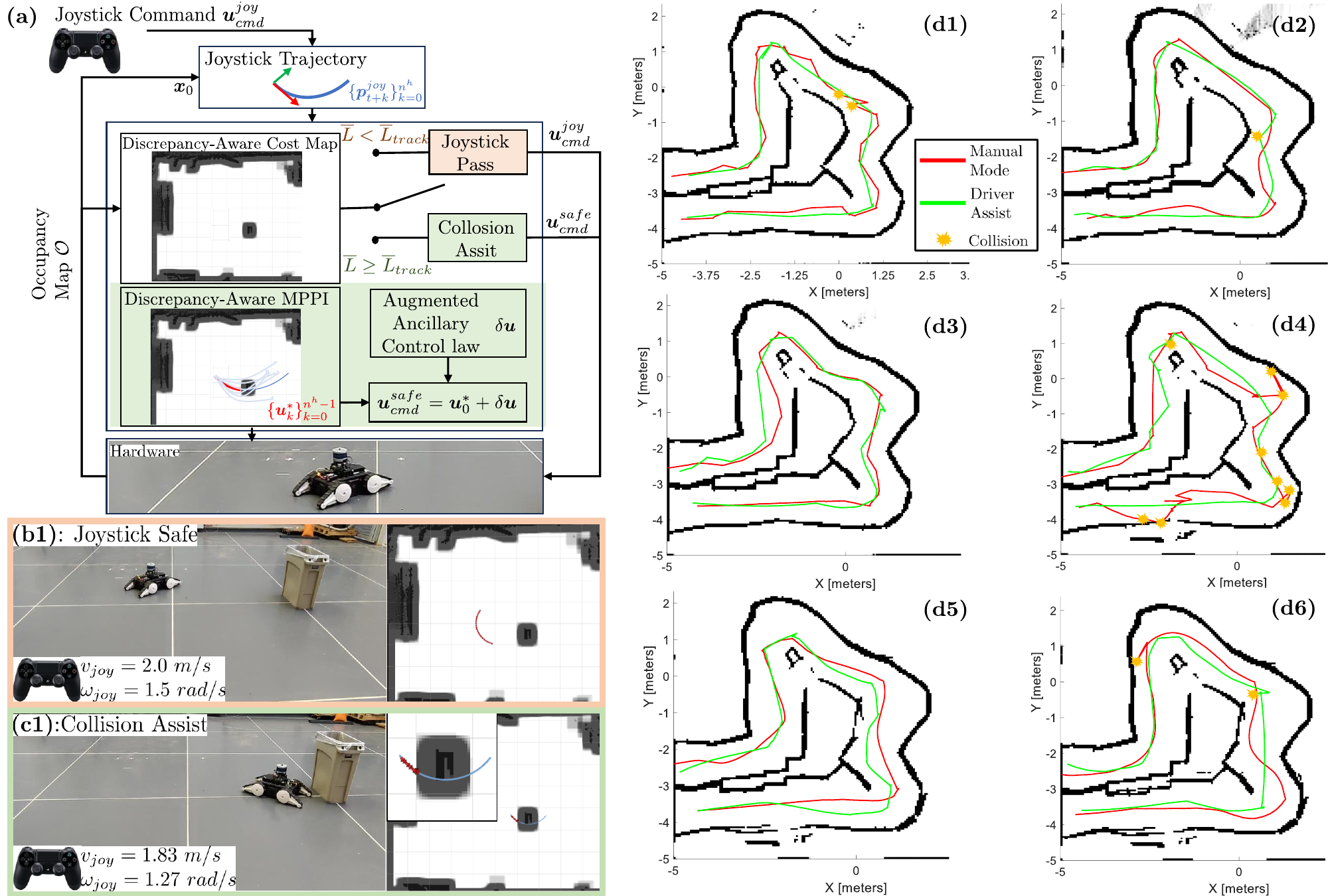}
    \caption{Experimental results of using the proposed framework performing collision avoidance assistance for human drivers. Based on human-provided joystick commands and a user-chosen risk tolerance $\epsilon$, the collision risk of joystick trajectory based on current state feedback is evaluated on the proposed discrepancy-aware cost map. If the collision is projected based on the evaluated cost, the proposed planning and control framework will activate to provide overriding safe commands. Sub-figure (a) is a diagram that summarizes the driver-assist logic flows. Sub-figures (b1) and (b2) illustrate the scenario for safe joystick inputs where no collision assist is inactive. Sub-figures (c1) and (c2) illustrate the scenario where the projected joystick trajectory is unsafe and where the collision assist program is active to provide optimal and safe commands. Sub-figures (d1)-(d6) are the six test subjects' trajectories in manual mode and driver assist mode.}
    \label{fig:result_driver_assist}
\end{figure*}
\subsubsection{Human-in-the-Loop Setup}
This section explores the application of our proposed framework in driver-assist scenarios. A driver remotely operates a UGV robot via a joystick. The driver controls the vehicle's linear velocity in the body frame's $x$ axis ($v_{joy}$) and angular velocity in the $z$ axis ($\omega_{joy}$), the same as the input commands in \eqref{eq:origin}. The driver assist program aims to follow the joystick commands as closely as possible while avoiding obstacles up to the specified risk tolerance.

Drawing from Hugemann's study on driver reaction times in road traffic \citep{Hugemann2002DriverRT}, we anticipate a, rather conservative, human reaction time of $1.5$ seconds for collision avoidance. We interpret this $1.5$ seconds reaction time as a $1.5$ second ZOH to the most recent joystick commands. Specifically, we replace the desired reference trajectory with the joystick trajectory $\{\mathbold{x}_{t+k}\}_{k=1}^{n_h}$, formulated as 
\begin{equation}
    \mathbold{x}_{t+k}^{joy} = \mathbold{x}_{t} + \sum_{i=0}^{k}  g(\mathbold{x}_{t+i})\begin{bmatrix}
        v_{joy}\\
        \omega_{joy}
    \end{bmatrix}\Delta T,
\end{equation}
where $k\in \mathbb{Z}_{1}^{n_h}$. To ensure maximum teleoperation control in collision-free environments, we only provide trajectory correction when a collision is imminent based on the joystick trajectory.

In our experiment, human subjects drive the UGV in configuration A without a direct view of the test course. The test drivers rely solely on a 2D occupancy map for navigating among obstacles, matching the perception capability of the framework. The human drivers are tasked to finish the narrow test course without collision as quickly as possible.   

Note, the same training set presented in section \ref{sec:training_details} is used in the driver assist application despite the vehicle trajectory does not match the training Figure 8 trajectory to highlight the learned upper bounds $Z_{\epsilon}$ and $Z_{\epsilon}^{\perp}$ is training trajectory independent. 

\subsubsection{Driver-Assist Implementation}
Using the training result tabulated in table \ref{table:table1}, we construct the discrepancy-aware cost map with a risk value $\epsilon$ ranges in $(0.0029,0.0010)$. We determine a joystick trajectory cost:
\begin{equation}
    \overline{\mathcal{L}}_{joy} = \sum_{k=1}^{n_h} \alpha_{joy}\mathcal{L}_{\mathcal{C}}(\mathbold{p}_{t+k}^{joy} , \mathcal{C}_{\epsilon}),
\end{equation}
where $\alpha_{joy}\in \mathbb{R}_{>0}$ is an adjustable parameter with default value of $1.0$. The cost-to-go and input cost are set to zero, considering the joystick trajectory itself is the desired reference. If $\overline{\mathcal{L}}_{joy}$ is less than $\overline{\mathcal{L}}_{track}$, the joystick trajectory is deemed safe with a confidence level of $1-\epsilon$ for the immediate $1.5$ seconds. Conversely, if $\overline{\mathcal{L}}_{joy} \geq \overline{\mathcal{L}}_{track}$, there is a potential collision, requiring a safety override.

The proposed framework implements the safety override. We initialized the MPPI algorithm using the joystick input as a warm start, with input perturbation covariance set as a diagonal matrix of $0.25~m/s$ and $~0.25 rad/s$ for linear and angular velocities. Since the maximum joystick linear and angular speeds are $2~m/s$ and $2~rad/s$, a zero-speed (stopping) command cannot be guaranteed to be sampled. As a solution, we also add turn-in-place (TIP) motion primitives into the flow sampling set and MPPI input aggregation, in addition to the joystick trajectory. Specifically, the joystick turning command with $0$ linear velocity, $\{\mathbold{u}_{k}^{TIP}\}_{k=0}^{k=n_h-1} = [0,\omega_{joy}]^{T}$, allow the driver to perform heading adjustments freely, but not the vehicle positions because of potential collision risks in the current heading direction. 

With a total MPPI sample size of $N_{sample} = 5000$ and inverse temperature $\lambda = 0.05$, we allocated and aggregated $80\%$ of sampled input discrepancies to the joystick input sequences and the remaining samples to the TIP motion primitives. Following Algorithm \eqref{alg:2} to compute the associated costs of each sampled flow, we obtain the collision-assist safety override $\mathbold{u}_{cmd}^{safe}$. \footnote{We always use the joystick trajectory as a warm start without using the previous MPPI solution. Such engineering decision is made to ensure maximum adherence to the driver's command which can be discontinuous and far from the previous projected joystick trajectory.} A detailed diagram of the collision-assist program is presented in Fig. \eqref{fig:result_driver_assist}.

We select parameter $\alpha_{joy}$ to be $1/{k^{2}}$, inversely proportional to the prediction horizon squared. As the collision-assistance program recalculates at $20$ Hz, this cost decay over future horizon steps is chosen to reduce planner conservatism. 

\subsubsection{Results and Discussion} We conducted an experimental comparison with six human drivers navigating a narrow pathway, with various experience levels with joystick operation. Each test subject drives the same test course twice, once with and once without the proposed collision assistive program. The order of the testing is randomized to minimize any bias due to the driver's familiarity with the test course. The test results are tabulated in Table \ref{table:table_driver_assit}, with comparative drive trajectories depicted in Fig. \ref{fig:result_driver_assist}.

\begin{table}
\begin{center} 
\resizebox{\columnwidth}{!}{
\begin{tabular}{ |c|c|c|c|c|c|} \hline 
Subj. & Seq. & $\#$ C (M) & $T_{t}$ (M)  & $\#$ C (CA) &  $T_{t}$ (CA) \\ 
\hline 
{1} & M/DA & 2 & 63.7 & 0 &38.8  \\ \hline

{2} & M/DA &0 & 50.7 &1 & 40.3 \\ \hline

{3} & DA/M &0 & 41.5 &0 & 44.8 \\ \hline

{4} & M/DA &7 & 107.5 &3 & 63.6   \\ \hline

{5} & DA/M & 0& 45.6 &0 & 30.1 \\ \hline

{6} & DA/M & 1& 35.6 &1 & 28.3  \\ \hline

\end{tabular}
}

\caption{Summary of the diver assist program with 6 different human drivers. Each subject is assigned to attempt to complete the task with either manual mode (M) or the collision assistance mode (CA). The mode experience order is indicated in the second column. The number of collisions ($\# C$) during the test drives is recorded and the test drive duration $T_t$.}
\label{table:table_driver_assit}
\end{center}
\end{table}

Based on the tabulated results, we can see the proposed collision-assistive program showed minor improvements for rather inexperienced drivers (such as drivers 1 and 4) who faced challenges during their initial manual run. Despite the driving sequence being assigned randomly, a consistent reduction in driving time was observed on their second pass, likely attributable to the accumulation of track and vehicle knowledge. Notably, subjects who initially experienced manual driving followed by the driver-assist program exhibited percent drive time improvements of $39.1\%$, $20.5\%$, and $40.8\%$. Conversely, subjects who first used the assistive program and then switched to manual mode showed improvements of $-8.0\%$, $34.0\%$, and $20.5\%$. This variance in performance improvement may be attributed to enhanced driver confidence in collision avoidance due to the assistive program.

The oral feedback from test subjects on the driver-assist program was mixed. Positive remarks primarily center around the program’s effectiveness and the ability to provide collision-avoiding heading adjustments automatically. However, several drivers noted counterintuitive instances where the vehicle, under driver-assist control, executed minor reverse maneuvers despite receiving forward velocity commands. This behavior is the result of the model mismatches, where the nominal model predicts a trajectory leading into high-risk or collision-prone areas, prompting the program to revert to a safer region. Furthermore, integrating user-preference-based cost tuning could better tailor individual drivers' preferences in balancing between reference tracking and collision safety, as suggested in recent studies \citep{pmlr-v168-cosner22a, tucker2022polar}. It is important to note that due to the limited number of participants, we refrain from drawing statistically significant conclusions from this study.

\subsection{Limitation and Future Works}
The current limitations of the proposed framework include potential safety violations in the importance sampling aggregation law \citep{yin2023shield}, a tendency towards planning overly conservative trajectories, and vulnerability to training outliers in model discrepancies. Based on the choice of the inverse temperature $\lambda$, $\lambda\to \infty$ plans smoother trajectories with equal weighting, and $\lambda \to 0$ equates to sample rejection that could lead to instability \citep{8558663}. However, sampled input aggregation as $\lambda\to \infty$ could lead to an unsafe aggregated trajectory where safe and unsafe trajectories are less distinguished. Nevertheless, $\lambda \to 0$ prioritizes picking safe trajectories but may lead to overly conservative and chattered trajectories from randomly sampled input sequences. 

On the discrepancy identification front, conformal prediction relies on empirical distribution quantiles instead of concentration inequalities, allowing it to be more sample-efficient. However, in the presence of a large number of samples, potential outliers, and small risk tolerance, the identified $Z_{\epsilon}$ and $Z_{\epsilon}^{\perp}$ can be falsely large, yielding overly conservative results. Sufficient data set prepossessing might be required to remove outliers.

Looking ahead, our future work aims to refine the offline conformal-driven discrepancy upper bound analysis into an online algorithm. We are also exploring adaptive conformal prediction methods, akin to the one detailed in \citep{dixit2023adaptive}, to actively detect distribution shifts from the training sets due to terrain or environmental changes. Moreover, a higher fidelity nominal model such as a learning-based model can be used to enhance tracking performance. Another future research involves the theoretical analysis of the discrepancy-aware MPPI in terms of optimality and recursive feasibility. The authors are also investigating the possibilities of rejecting unsafe MPPI trajectory samples, incorporating motion primitives, and incorporating generalized state constraints beyond obstacle avoidance constraints.

\section{Conclusion}\label{sec:conclusion}
In summary, this manuscript provides a novel multi-layered framework that is designed to provide safety-critical autonomy in the presence of obstacles and model discrepancies. The discrepancies include poorly modeled terrain interactions, system delays, and simplified dynamic models. The framework's core strategy involves data-driven discrepancy identification, extracting both matched and unmatched model residuals from offline data, with minimal assumptions. These identified discrepancies are then used to augment the vehicle's ancillary controller, offering stability assurances for the closed-loop system. We then complete the framework with a discrepancy-aware MPPI planner that generates (sub)optimal and safe reference tracking paths, taking into account imperfection in actual trajectory tracking due to model discrepancies. 
 
 Our proposed framework is theoretically supported throughout its construction. By deducing the maximum tracking error resulting from matched and unmatched model discrepancies, we ensure safety and robustness by assessing the interactions between the planner and controller layers. We also validated the proposed framework through extensive hardware experiments, demonstrating its effectiveness in trajectory tracking in cluttered environments. Additionally, we have successfully adapted the framework as a driver-assist program, providing optimal, safe assistive commands in potential collision scenarios.

\section*{Acknowledgements}
This work was supported by DARPA under the Learning and Introspective Control (LINC) program and by the Technology Innovation Institute (TII) through a grant to the Center for Autonomous Systems and Technologies.

The authors thank Anushri Dixit and Thomas Tomar for their help in configuring the Flipper Pro. We thank Kejun Li, William Welch, Victor Zendejas Lopez, Mandralis Ioannis, and Lizhi Yang for helping with the experiments.



\bibliographystyle{elsarticle-num} 
 \bibliography{references}






\appendix 
\section{Discrepancy-Aware Cost Map Construction Details} \label{Appendix:1}
This section details the process of constructing a discrepancy-aware cost map given an occupancy map $\mathcal{O}$. For every occupied grid, its neighboring grids within a distance of $ (r_{\Delta T} + r_{ego})$, referred to as the safety buffer, will also be considered as occupied grids to ensure robust obstacle avoidance given model inaccuracy and vehicle geometry.

An enlarged grid map $\hat{\mathcal{O}} \in \mathbb{R}^{l_{map} + 2N_{\epsilon} \times w_{map} + 2N_{\epsilon}}$ is initialized, and the nominal map is shifted and aggregated to form the collision buffer grid map \eqref{eq:collision_buffer_grid_map} and visualized in Figure \ref{fig:first_page_summary}. Specifically, We shift the nominal occupancy map by $N_{\epsilon}$ grids in both column-wise and row-wise positive and negative directions, as described in Algorithm \ref{alg:2}. In total, there are $4N_{\epsilon} + 2$ shifted grid maps which are grid-wise aggregated to obtain the grid map with collision buffer,
\begin{equation} \label{eq:collision_buffer_grid_map}
    \hat{\mathcal{O}}_{buffer} \triangleq \frac{1}{4N_{\epsilon} + 2}\sum_{i = -N_{\epsilon}}^{N_{\epsilon}} \sum_{j = -N_{\epsilon}}^{N_{\epsilon}} \frac{\hat{\mathcal{O}}^{s}_{i,j}}{100},
\end{equation}
where we use $\hat{\mathcal{O}}^{s}_{i,j} \triangleq \hat{\mathcal{O}}(N_{\epsilon} + i:N_{\epsilon} + l_{map} + i - 1, N_{\epsilon} + j:N_{\epsilon} + w_{map} + j - 1)$ as the grid shifting operation. The normalization factor $100$ is applied on the enlarged occupancy maps to obtain a value within $[0,1]$.

The discrepancy-aware cost map $\mathcal{C}_{\epsilon}$ is constructed by applying a tunable cost multiplier $\alpha_{shift}/N_{shift,\epsilon}$ to this enlarged occupancy map $\hat{\mathcal{O}}_{buffer}$ The multiplier $\alpha_{shift}/N_{shift,\epsilon}$ to the $N_{\epsilon}$-shifted occupancy maps is a design parameter where $\alpha_{shift} \in \mathbb{R}_{>0}$ is a cost multiplier similar to $\alpha_{ISS}$ in \eqref{eq:augmented_cost} and the positive integer $N_{shift} \in \{1,\hdots, N_{\epsilon}\}$ serves as a cost decay. This cost adjustment accounts for both conservative obstacle avoidance and model mismatches, ensuring that the true and uncertain system flow remains collision-free given the risk tolerance. The final cost map balances obstacle collision penalties and tracking costs (Algorithm \ref{alg:2}).  The design parameter $\alpha_{shift}/N_{\epsilon}$ needs to be lower bounded by the maximum tracking cost $\overline{L}_{track}$ to penalize unsafe trajectories. A larger $\alpha_{shift}$ leads to more conservative obstacle avoidance behavior, and a smaller $\alpha_{shift}$ reduces the cost associated with the avoidance of the safe buffer due to model mismatches. The discrepancy-aware cost map, denoted as $\mathcal{C}_{\epsilon}$ (See Algorithm \ref{alg:2}), is the centering $l_{map}$ and $w_{map}$ grids of the following enlarged cost map:
\begin{equation}
        \mathcal{C}_{buffer} \triangleq \sum_{i = -N_{\epsilon}}^{N_{\epsilon}} \sum_{j = -N_{\epsilon}}^{N_{\epsilon}} \alpha_{shift}/\underbrace{\sqrt{i^2 + j^2}}_{\triangleq N_{shift,\epsilon}}\hat{\mathcal{O}}^{s}_{i,j}.
\end{equation}
Lastly, we apply a maximum cost map threshold $\mathcal{C}_{\epsilon} = \min(\mathcal{C}_\epsilon,\overline{L}_{track})$ with the maximum tracking cost grid-wise to even the cost penalty for obstacle collision.

\end{document}